\documentclass{llncs}
\usepackage{amsmath}
\usepackage{amssymb}
\usepackage{times}
\usepackage{indentfirst}
\usepackage{graphicx}

\begin{document}

\title{Characteristic of partition-circuit matroid through approximation number}

\author{Yanfang Liu, William Zhu~\thanks{Corresponding author.
E-mail: williamfengzhu@gmail.com (William Zhu)}, Yuanping Zhang}
\institute{
Lab of Granular Computing,\\
Zhangzhou Normal University, Zhangzhou 363000, China}



\date{\today}          
\maketitle

\begin{abstract}
Rough set theory is a useful tool to deal with uncertain, granular and incomplete knowledge in information systems.
And it is based on equivalence relations or partitions.
Matroid theory is a structure that generalizes linear independence in vector spaces, and has a variety of applications in many fields.
In this paper, we propose a new type of matroids, namely, partition-circuit matroids, which are induced by partitions.
Firstly, a partition satisfies circuit axioms in matroid theory, then it can induce a matroid which is called a partition-circuit matroid.
A partition and an equivalence relation on the same universe are one-to-one corresponding, then some characteristics of partition-circuit matroids are studied through rough sets.
Secondly, similar to the upper approximation number which is proposed by Wang and Zhu, we define the lower approximation number.
Some characteristics of partition-circuit matroids and the dual matroids of them are investigated through the lower approximation number and the upper approximation number.

\textbf{Keywords:}
Rough set; Matroid; Partition-circuit matroid; Lower approximation number; Upper approximation number.
\end{abstract}


\section{Introduction}
Rough set theory~\cite{Pawlak82Rough} was first proposed by Pawlak as a mathematical tool for imperfect data analysis.
Through two definable subsets which are the lower and upper approximations, rough set theory deals with the approximation of an arbitrary subset of a universe, and discovers rules from data without priori knowledge.
It has been successfully applied into feature selection~\cite{ChenMiaoWangWu11ARough,HuLiuYu07Mixed,HuYuLiuWu08Neighborhood}, rule extraction~\cite{InuiguchiTanino02Generalized,DuHuZhuMa11Rule}, uncertainty reasoning~\cite{Pawlak91Rough,PawlakSkowron07RoughSetAndBooleanReasoning}, decision evaluation~\cite{WuLeung11theoryandapplications,QianLiangLiZhangDang08Measures}, granular computing~\cite{SkowronStepaniukSwiniarski12Modeling,YaoZhong99Potential,Yao08Granular,Yao01Modeling} and so on.

Rough set theory is based on equivalence relations or partitions.
In recent years, many extensions of classical rough sets have been proposed.
For example, through relaxing the partitions to coverings, several covering-based rough sets have been proposed~\cite{BonikowskiBryniarskiWybraniecSkardowska98Extensions,ZhuWang03Reduction,Pomykala87Approximation,Zakowski83Approximations,Zhu09RelationshipAmong}.
In these covering-based rough sets, the concept of covering of a universe was presented to construct the lower and upper approximations of an arbitrary set.

Matroid theory~\cite{Lai01Matroid} was proposed by Whitney as a generalization of linear independence in vector spaces.
It borrowed extensively from the terminology of linear algebra and graph theory, and made great progress in recent decades.
In theory, a matroid can be defined by dozens of ways, which provides a well platform to connect it with other theories.
Some authors have connected matroid theory with classical rough sets~\cite{LiuZhuZhang12Relationshipbetween}, covering-based rough sets~\cite{ZhangWangFengFeng11reductionofrough,WangZhu11Matroidal,WangZhuMin11Transversal}, generalized rough sets based on relations~\cite{WangZhuMin11TheVectorially,ZhuWang11Matroidal}, fuzzy theory~\cite{RoyVoxman88Fuzzy} and so on.
In application, it has widely been used in many fields, such as combinatorial optimization~\cite{Lawler01Combinatorialoptimization} and algorithm design~\cite{Edmonds71Matroids}.

In this paper, a new type of matroids is proposed, which is called partition-circuit matroid.
Through partitions, we build the connection between rough sets with matroids.
And some characteristics of partition-circuit matroids and the dual matroids of them are studied through rough sets and the lower and upper approximation numbers.
The main contributions of this paper are two folds.
On the one hand, a partition satisfies the circuit axioms of matroid theory~\cite{Lai01Matroid}, then a type of matroids, called partition-circuit matroids, are induced by partitions.
Any partition has a one-to-one correspondence with an equivalence relation on the same universe.
Therefore, some characteristics of partition-circuit matroids are studied through the lower and upper approximation operators in rough sets.
On the other hand, we investigate some characteristics of partition-circuit matroids and the dual matroids of them through lower and upper approximation numbers.
Wang and Zhu proposed the upper approximation number based on coverings in~\cite{WangZhu11Matroidal,WangZhuMin11Transversal,ZhuWang11Matroidal}.
Similar to the upper approximation number, we propose the lower approximation number based on coverings.
First, some properties of the lower and upper approximation numbers are investigated.
Second, some characteristics of partition-circuit matroids are studied through the lower approximation number based on partitions.
Third, some characteristics of the dual matroid of a partition-circuit one are investigated through the upper approximation number.

The rest of this paper is arranged as follows. Section~\ref{S:basicdefinitions} introduces some basic definitions of rough sets and matroids.
In Section~\ref{S:matroidalstructure}, through partitions, we get a type of matroids, namely, partition-circuit matroids, and study them through rough sets.
Section~\ref{S:partition-circuitanddualmatroid} introduces the upper approximation number and proposes the lower upper approximation number to investigate some characteristics of partition-circuit matroids and the dual matroids of them.
Finally, we conclude this paper in Section~\ref{S:conclusions}.

\section{Basic definitions}
\label{S:basicdefinitions}
In this section, we first review some fundamental ideas and results related to rough sets, and then introduce some notions of matroids.

\subsection{The rough set model}
Let $U$ be a finite and nonempty set and $R$ an equivalence relation on $U$.
The equivalence relation $R$ induces a partition $U/R=\{P_{1}, \cdots, P_{m}\}$ on $U$, where $P_{1}, \cdots, P_{m}$ are the equivalence classes.

The equivalence classes of $R$ are elementary sets to construct rough set approximations.
For any $X\subseteq U$, its lower and upper approximations are defined as follows:
\begin{center}
~$\underline{R}(X)=\bigcup\{P\in U/R:P\subseteq X\}$;\\
\quad\quad $\overline{R}(X)=\bigcup\{P\in U/R:P\bigcap X\neq\emptyset\}$.
\end{center}

$X^{c}$ is denoted by the complement of $X$ in $U$ and $Y\subseteq U$. We have the following properties of rough sets:

(1L) $\underline{R}(X)\subseteq X$;

(1H) $X\subseteq \overline{R}(X)$;

(2L) $\underline{R}(X\bigcap Y)=\underline{R}(X)\bigcap \underline{R}(Y)$;

(2H) $\overline{R}(X\bigcup Y)=\overline{R}(X)\bigcup \overline{R}(Y)$;

(3L) $\underline{R}(X^{c})=(\overline{R}(X))^{c}$;

(3H) $\overline{R}(X^{c})=(\underline{R}(X))^{c}$;

(4L) $\underline{R}((\underline{R}(X))^{c})=(\underline{R}(X))^{c}$;

(4H) $\overline{R}((\overline{R}(X))^{c})=(\overline{R}(X))^{c}$.

The (1L), (1H), (2L), (2H), (4L) and (4H) are characteristic properties of the lower and upper approximation operators~\cite{LinLiu94Rough}, in other words, all other properties can be deduced from these properties.

\subsection{The matroid model}
A matroid is a structure that generalizes the notion of linear independence in matrices.
There are several ways to define a matroid, such as independent sets, circuits, bases, rank function and closure operator.
We will first define a matroid that focuses on its independent sets.

\begin{definition}(Matroid~\cite{Lai01Matroid})
A matroid $M$ is a pair $(U, \mathbf{I})$, where $U$ is a finite set (called the ground set) and $\mathbf{I}$ is a collection of subsets of $U$ (called the independent sets) with the following properties:\\
(I1) $\emptyset\in\mathbf{I}$;\\
(I2) If $I\in\mathbf{I}$ and $I'\subseteq I$, then $I'\in\mathbf{I}$;\\
(I3) If $I_{1}, I_{2}\in\mathbf{I}$ and $|I_{1}|<|I_{2}|$, then there exists $u\in I_{2}-I_{1}$ such that $I_{1}\bigcup\{u\}\in\mathbf{I}$, where $|I|$ denotes the cardinality of $I$.
\end{definition}

In order to make some expressions simple, we introduce some symbols as follows.

\begin{definition}(\cite{Lai01Matroid})
Let $U$ be a finite set and $\mathbf{A}$ a family of subsets of $U$.
Then\\
$Upp(\mathbf{A})=\{X\subseteq U:\exists A\in\mathbf{A}, s.t. A\subseteq X\}$;\\
$Low(\mathbf{A})=\{X\subseteq U:\exists A\in\mathbf{A}, s.t. X\subseteq A\}$;\\
$Max(\mathbf{A})=\{X\in\mathbf{A}:\forall Y\in\mathbf{A}, X\subseteq Y\Rightarrow X=Y\}$;\\
$Min(\mathbf{A})=\{X\in\mathbf{A}:\forall Y\in\mathbf{A}, Y\subseteq X\Rightarrow X=Y\}$;\\
$Opp(\mathbf{A})=\{X\subseteq U:X\notin\mathbf{A}\}$.
\end{definition}

A maximal independent set of a matroid is a base.
The base of a matroid generalizes the maximal linear independence in vector spaces.

\begin{definition}(Base~\cite{Lai01Matroid})
Let $M=(U, \mathbf{I})$ be a matroid.
Any maximal independent set in $M$ is called a base of $M$, and the family of bases of $M$ is denoted by $\mathbf{B}(M)$, i.e., $\mathbf{B}(M)=Max(\mathbf{I})$.
\end{definition}

A matroid and its family of bases are uniquely determined by each other.
Then, one of equivalent definitions of a matroid is represented in terms of bases.

\begin{proposition}(Base axioms~\cite{Lai01Matroid})
Let $\mathbf{B}$ be a family of subsets of $U$.
Then there exists a matroid $M=(U, \mathbf{I})$ such that $\mathbf{B}=\mathbf{B}(M)$ if and only if $\mathbf{B}$ satisfies the following two conditions:\\
(B1) $\mathbf{B}\neq\emptyset$;\\
(B2) If $B_{1}, B_{2}\in\mathbf{B}$ and $x\in B_{1}-B_{2}$, then there exists $y\in B_{2}-B_{1}$ such that $(B_{1}-\{x\})\bigcup\{y\}\in\mathbf{B}$.
\end{proposition}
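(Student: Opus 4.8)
The plan is to establish the two implications of the equivalence separately, proving necessity first (which is routine) and then sufficiency (which carries the real content).

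For the necessity direction, assume $\mathbf{B}=\mathbf{B}(M)=Max(\mathbf{I})$ for some matroid $M=(U,\mathbf{I})$. Condition (B1) is immediate, since $\emptyset\in\mathbf{I}$ by (I1) and $U$ is finite, so $\mathbf{I}$ has a member of maximum cardinality and such a member is maximal. Before deriving (B2) I would first record that the bases of a matroid all have the same cardinality, and in fact coincide with the independent sets of maximum cardinality: if $B_1,B_2\in\mathbf{B}$ with $|B_1|<|B_2|$, then (I3) enlarges $B_1$ within $\mathbf{I}$, contradicting its maximality, and the same reasoning shows every maximal independent set attains the maximum cardinality. Granting this, (B2) follows quickly: given $B_1,B_2\in\mathbf{B}$ and $x\in B_1-B_2$, the set $B_1-\{x\}$ lies in $\mathbf{I}$ by (I2) and has cardinality $|B_1|-1<|B_2|$, so by (I3) there is $y\in B_2-(B_1-\{x\})$ with $(B_1-\{x\})\cup\{y\}\in\mathbf{I}$; since $x\notin B_2$ forces $y\neq x$ and hence $y\notin B_1$, we get $y\in B_2-B_1$, and $(B_1-\{x\})\cup\{y\}$ is an independent set of maximum cardinality $|B_1|$, therefore a base.

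For the sufficiency direction, assume $\mathbf{B}$ satisfies (B1) and (B2), and define $\mathbf{I}=Low(\mathbf{B})$. I would verify that $M=(U,\mathbf{I})$ satisfies (I1)--(I3) and then that $\mathbf{B}(M)=\mathbf{B}$. Here (I1) follows from (B1) because $\emptyset$ is contained in any member of $\mathbf{B}$, and (I2) is built into the definition of $Low$. Two auxiliary lemmas drive the rest. The first is that all members of $\mathbf{B}$ share a common cardinality $r$: otherwise choose $B_1,B_2\in\mathbf{B}$ with $|B_1|>|B_2|$ and $|B_1-B_2|$ as small as possible, pick $x\in B_1-B_2$, and apply (B2) to obtain a base $B_3=(B_1-\{x\})\cup\{y\}$ with $y\in B_2-B_1$, which still has cardinality $|B_1|$ but satisfies $|B_3-B_2|<|B_1-B_2|$, contradicting the minimal choice. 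The second is an ``exchange into a base'' lemma: for every $I\in\mathbf{I}$ and every $B\in\mathbf{B}$ there is $B'\in\mathbf{B}$ with $I\subseteq B'\subseteq I\cup B$; to prove it, start from some base containing $I$ (one exists since $I\in Low(\mathbf{B})$), among all bases containing $I$ choose $B'$ minimizing $|B'-(I\cup B)|$, and then use (B2) to trade an element of $B'-(I\cup B)$ for an element of $B$, forcing this quantity to be $0$.

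With these lemmas in hand, (I3) is the crux. Given $I_1,I_2\in\mathbf{I}$ with $|I_1|<|I_2|$, fix $B_2\in\mathbf{B}$ with $I_2\subseteq B_2$ and apply the exchange lemma to get $B_1\in\mathbf{B}$ with $I_1\subseteq B_1\subseteq I_1\cup B_2$. Then $B_1-I_1$ and $I_2-I_1$ both lie inside $B_2-I_1$, and a cardinality count using $|B_1|=|B_2|=r$ together with $I_1\cap I_2\subseteq I_1\cap B_2$ shows that if these two sets were disjoint then $|I_2|\le|I_1|$, which is false; hence there is $u\in(B_1-I_1)\cap(I_2-I_1)$, and since $I_1\cup\{u\}\subseteq B_1$ we get $I_1\cup\{u\}\in\mathbf{I}$ with $u\in I_2-I_1$, as (I3) requires. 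Finally, $Max(\mathbf{I})=\mathbf{B}$: each $B\in\mathbf{B}$ is maximal in $\mathbf{I}$ since a proper superset in $\mathbf{I}$ would be contained in a base of the same cardinality $r$, which is impossible; conversely any maximal independent set is contained in a member of $\mathbf{B}$ and hence equals it. I expect the exchange-into-a-base lemma and the counting step inside (I3) to be the main obstacle; everything else is bookkeeping with the definitions of $Low$ and $Max$ and with the cardinalities involved.
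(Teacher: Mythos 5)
Your proof is correct: both directions are the standard textbook argument (equal cardinality of bases, the construction $\mathbf{I}=Low(\mathbf{B})$, the exchange-into-a-base lemma, and the counting step for (I3) all check out). Note that the paper itself states this base-axiom characterization as a quoted result from the matroid-theory literature and gives no proof, so there is nothing in the paper to compare against; your argument supplies the missing details in essentially the canonical way.
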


The complement of the independent sets in power sets are dependent ones.
And a minimal set of the dependent sets is called a circuit of the matroid.

\begin{definition}(Circuit~\cite{Lai01Matroid})
Let $M=(U, \mathbf{I})$ be a matroid.
A minimal dependent set in $M$ is called a circuit of $M$, and we denote the family of all circuits of $M$ by $\mathbf{C}(M)$, i.e., $\mathbf{C}(M)=Min(\mathbf{I}^{c})$, where $\mathbf{I}^{c}$ denotes the complement of $\mathbf{I}$ in $2^{U}$.
\end{definition}

A matroid can be defined from the viewpoint of its circuits in the following proposition.
A matroid uniquely determines its circuits, and vice versa.

\begin{proposition}(Circuit axioms~\cite{Lai01Matroid})
\label{P:circuitaxioms}
Let $\mathbf{C}$ be a family of subsets of $U$.
Then there exists a matroid $M=(U, \mathbf{I})$ such that $\mathbf{C}=\mathbf{C}(M)$ if and only if $\mathbf{C}$ satisfies the following properties:\\
(C1) $\emptyset\notin\mathbf{C}$;\\
(C2) If $C, C'\in\mathbf{C}$ and $C'\subseteq C$, then $C'=C$;\\
(C3) If $C_{1}, C_{2}\in\mathbf{C}, C_{1}\neq C_{2}$ and $u\in C_{1}\bigcap C_{2}$, then there exists $C_{3}\in\mathbf{C}$ such that $C_{3}\subseteq C_{1}\bigcup C_{2}-\{u\}$.
\end{proposition}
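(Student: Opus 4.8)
The plan is to prove the two implications of the equivalence separately. For the \emph{necessity} direction, suppose $\mathbf{C} = \mathbf{C}(M)$ for some matroid $M = (U, \mathbf{I})$. Properties (C1) and (C2) are immediate: $\emptyset \in \mathbf{I}$ by (I1), so $\emptyset$ is not dependent and hence not a circuit, which gives (C1); and (C2) merely restates that $\mathbf{C}(M) = Min(\mathbf{I}^{c})$ consists of minimal dependent sets. The real content is (C3), the circuit-elimination property, which I would prove by contradiction. Assuming $C_{1} \bigcup C_{2} - \{u\}$ contains no circuit, it is independent. Since $C_{1}$ and $C_{2}$ are distinct circuits, neither contains the other, so I may pick $v \in C_{2} - C_{1}$; then $C_{2} - \{v\}$ is a proper subset of a circuit and hence independent, and I extend it to a maximal independent subset $I$ of $C_{1} \bigcup C_{2}$. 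A short argument shows $v \notin I$ (otherwise $C_{2} \subseteq I$) and that some $w \in C_{1} - I$ exists with $w \neq v$ (otherwise $C_{1} \subseteq I$), so $|I| \leq |C_{1} \bigcup C_{2}| - 2 < |C_{1} \bigcup C_{2}| - 1 = |(C_{1} \bigcup C_{2}) - \{u\}|$. Applying (I3) to $I$ and the independent set $(C_{1} \bigcup C_{2}) - \{u\}$ yields an element of $C_{1} \bigcup C_{2}$ that can be added to $I$ while preserving independence, contradicting the maximality of $I$; thus $(C_{1} \bigcup C_{2}) - \{u\}$ is dependent and therefore contains a circuit.

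For the \emph{sufficiency} direction, given $\mathbf{C}$ satisfying (C1)--(C3), I would put $\mathbf{I} = \{X \subseteq U : C \not\subseteq X \text{ for every } C \in \mathbf{C}\}$, that is, $\mathbf{I} = Opp(Upp(\mathbf{C}))$, and check the independence axioms. (I1) follows from (C1), and (I2) is immediate since a subset of $X$ cannot contain a member of $\mathbf{C}$ that $X$ itself avoids. Before (I3) I would record the auxiliary fact that if $I \in \mathbf{I}$ and $I \bigcup \{e\} \notin \mathbf{I}$, then $I \bigcup \{e\}$ contains exactly one member of $\mathbf{C}$ and that member contains $e$; uniqueness is precisely where (C3) enters, since two distinct such circuits both contain $e$, so eliminating $e$ would place a member of $\mathbf{C}$ inside $I$. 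Then (I3) follows from an extremal argument: among pairs $I_{1}, I_{2} \in \mathbf{I}$ with $|I_{1}| < |I_{2}|$ admitting no augmentation, choose one with $|I_{1} - I_{2}|$ minimal, produce suitable elements $f \in I_{1} - I_{2}$ and $e \in I_{2} - I_{1}$, and apply circuit elimination (C3) to the circuits arising from the failed augmentations to contradict minimality. Finally I would verify $\mathbf{C}(M) = \mathbf{C}$: each $C \in \mathbf{C}$ is dependent and, by (C2), minimally so, hence a circuit; conversely a minimal dependent set must contain some $C \in \mathbf{C}$ by the definition of $\mathbf{I}$, and minimality forces equality.

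I expect the two exchange steps to be the main obstacles: extracting (C3) from (I3) in the necessity direction, where the cardinality bookkeeping on the maximal independent subset of $C_{1} \bigcup C_{2}$ must be arranged so that (I3) applies, and --- more delicately --- extracting (I3) from (C3) in the sufficiency direction, where one needs both the uniqueness of the fundamental circuit and a carefully chosen extremal counterexample. Everything else, namely (C1), (C2), (I1), (I2), and the identification $\mathbf{C}(M) = \mathbf{C}$, is routine manipulation of the definitions given in the excerpt.
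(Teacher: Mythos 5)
The paper itself gives no proof of this proposition: it is quoted as a known result from the matroid theory text \cite{Lai01Matroid}, so there is no in-paper argument to compare yours against. Judged on its own, your outline is the standard textbook proof and is sound. The necessity direction is complete as sketched: the cardinality bookkeeping $|I|\leq |C_{1}\bigcup C_{2}|-2<|(C_{1}\bigcup C_{2})-\{u\}|$ is right, and (I3) then contradicts maximality of $I$ among independent subsets of $C_{1}\bigcup C_{2}$. For sufficiency, your auxiliary fact (uniqueness of the fundamental circuit via (C3) eliminated at $e$) is correct, and the extremal scheme you propose for (I3) does close, e.g.\ as follows: among failing pairs pick $(I_{1},I_{2})$ with $|I_{1}-I_{2}|$ minimal; for $e\in I_{2}-I_{1}$ the fundamental circuit $C_{e}\subseteq I_{1}\bigcup\{e\}$ must meet $I_{1}-I_{2}$ (else $C_{e}\subseteq I_{2}$), say at $f$; by uniqueness $(I_{1}-\{f\})\bigcup\{e\}$ is independent and forms, with $I_{2}$, a pair of smaller difference, so minimality yields $g\in I_{2}-I_{1}$, $g\neq e$, with $J=(I_{1}-\{f\})\bigcup\{e,g\}$ independent; then $f\in C_{g}$ (else $C_{g}\subseteq J$), and eliminating $f$ from the distinct circuits $C_{e},C_{g}$ places a circuit inside $J$, a contradiction. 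So the only part of your write-up that is a plan rather than a proof --- the (I3) step --- is completable exactly along the lines you indicate; filling in those details would make the argument fully rigorous, which is more than the paper itself provides.
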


The cardinality of any maximal independent set in vector spaces can be expressed by the rank function of a matroid.

\begin{definition}(Rank function~\cite{Lai01Matroid})
Let $M=(U, \mathbf{I})$ be a matroid.
Then $r_{M}$ is called the rank function of $M$, where $r_{M}(X)=max\{|I|:I\subseteq X, I\in\mathbf{I}\}$ for all $X\subseteq U$.
\end{definition}

One of equivalent definitions of a matroid is represented in terms of its rank function.
And a matroid and its rank function are uniquely determined by each other.

\begin{proposition}
\label{P:rankdeterminematroid}
Let $M=(U, \mathbf{I})$ be a matroid and $r_{M}$ its rank function.
Then for all $X\subseteq U$, $X\in\mathbf{I}$ iff $r_{M}(X)=|X|$.
\end{proposition}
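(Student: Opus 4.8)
The plan is to prove both implications directly from the definition $r_{M}(X)=\max\{|I|:I\subseteq X, I\in\mathbf{I}\}$, using nothing more than elementary facts about cardinalities of finite sets together with the observation that, since $\emptyset\in\mathbf{I}$ by (I1) and $U$ is finite, the collection $\{I:I\subseteq X, I\in\mathbf{I}\}$ is nonempty and finite, so the maximum defining $r_{M}(X)$ is actually attained by some independent subset of $X$.

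For the forward direction, I would assume $X\in\mathbf{I}$. Then $X$ is itself an independent subset of $X$, hence one of the sets over which the maximum in the definition of $r_{M}(X)$ is taken, which gives $r_{M}(X)\geq|X|$. On the other hand, every $I\subseteq X$ satisfies $|I|\leq|X|$ because $X$ is finite, so $r_{M}(X)\leq|X|$. Combining the two inequalities yields $r_{M}(X)=|X|$.

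For the reverse direction, I would assume $r_{M}(X)=|X|$. By the remark above, there is some $I\subseteq X$ with $I\in\mathbf{I}$ and $|I|=r_{M}(X)=|X|$. Since $I$ is a subset of the finite set $X$ with $|I|=|X|$, necessarily $I=X$; hence $X\in\mathbf{I}$, as desired.

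There is no substantive obstacle in this argument; the statement is essentially immediate from the definition of the rank function. The only point deserving a word of care — and it is a minor one — is the assertion that the maximum in the definition of $r_{M}$ is achieved, which relies on finiteness of $U$ and on (I1) guaranteeing the set of competitors is nonempty; this is precisely what makes the reverse implication go through.
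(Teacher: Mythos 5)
Your argument is correct and complete: both directions follow directly from the definition $r_{M}(X)=\max\{|I|:I\subseteq X,\ I\in\mathbf{I}\}$, and you rightly flag the one point needing care, namely that the maximum is attained (finiteness of $U$ plus (I1)), which is what drives the reverse implication via the fact that a subset of a finite set with the same cardinality equals the set. The paper states this proposition without proof, treating it as a standard fact of matroid theory, and your proof is exactly the standard argument one would supply.
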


In order to represent the relationship between an element and a set of a universe, we introduce the closure operator through the rank function in matroids.

\begin{definition}(Closure~\cite{Lai01Matroid})
\label{D:closure}
Let $M=(U, \mathbf{I})$ be a matroid and $X\subseteq U$.
For any $u\in U$, if $r_{M}(X)=r_{M}(X\bigcup\{u\})$, then $u$ depends on $X$.
The subset including all elements depending on $X$ of $U$ is called the closure with respect to $X$ and denoted by $cl_{M}(X)$:\\
\centerline{$cl_{M}(X)=\{u\in U:r_{M}(X)=r_{M}(X\bigcup\{u\})\}$,}
where $cl_{M}$ is called the closure operator of $M$.
\end{definition}

In a matroid, if the closure of a set is equal to itself, then the set is a closed set.

\begin{definition}(Closed set~\cite{Lai01Matroid})
\label{D:closedset}
Let $M=(U, \mathbf{I})$ be a matroid and $X\subseteq U$.
Therefore $X$ is a closed set of $M$ if $cl_{M}(X)=X$.
\end{definition}

Duality is one of important characteristics of matroids, which can generate a new matroid through a given matroid.
And the new matroid is called dual matroid which generalizes the orthogonal complement of a vector space.
The dual matroid of a matroid is introduced as follows.

\begin{definition}(Dual matroid~\cite{Lai01Matroid})
Let $M=(U, \mathbf{I})$ be a matroid.
Then the matroid whose base family is $\{B:B^{c}\in\mathbf{B}(M)\}$ is called the dual matroid of $M$ and denoted by $M^{*}$.
\end{definition}

The relationship between a matroid and its dual matroid is represented as follows.

\begin{proposition}
\label{P:relationshipbetweenmatroidanditsdual}
Let $M=(U, \mathbf{I})$ be a matroid and $M^{*}$ the dual matroid.
$r_{M}$ is the rank function of $M$ and $r^{*}_{M}$ is the rank function of $M^{*}$.
Then, for all $X\subseteq U$,\\
\centerline{$r^{*}_{M}(X)=|X|-r_{M}(U)+r_{M}(X^{c})$.}
\end{proposition}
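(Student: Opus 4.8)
The plan is to work directly from the definition of the dual matroid via its base family, reducing the identity to counting intersections of bases of $M$ with $X$ and $X^{c}$. The crucial auxiliary fact I would establish first is a base-characterization of the rank function: for any matroid $N=(U,\mathbf{I})$ and any $Y\subseteq U$,
\centerline{$r_{N}(Y)=\max\{|B\bigcap Y|:B\in\mathbf{B}(N)\}$,}
and, as a special case, all bases of $N$ have the same cardinality $r_{N}(U)$. The equicardinality claim is immediate from (I3): if two bases had different sizes, the exchange property would enlarge the smaller one, contradicting its maximality. For the characterization, take a maximal independent subset $I$ of $Y$, so $|I|=r_{N}(Y)$; by repeated application of (I3) against a base, $I$ extends to some base $B$. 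Then $I\subseteq B\bigcap Y\subseteq Y$ and $B\bigcap Y\in\mathbf{I}$, hence $|I|\leq|B\bigcap Y|\leq r_{N}(Y)=|I|$, giving $|B\bigcap Y|=r_{N}(Y)$; and for any base $B'$, $B'\bigcap Y$ is an independent subset of $Y$, so $|B'\bigcap Y|\leq r_{N}(Y)$. This proves the claim.

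Next I would apply this characterization to $M^{*}$, whose base family is $\mathbf{B}(M^{*})=\{B:B^{c}\in\mathbf{B}(M)\}$ by definition, so that
\centerline{$r^{*}_{M}(X)=\max\{|B^{*}\bigcap X|:B^{*}\in\mathbf{B}(M^{*})\}=\max\{|B^{c}\bigcap X|:B\in\mathbf{B}(M)\}$.}
Since $B^{c}\bigcap X=X-B$, we have $|B^{c}\bigcap X|=|X|-|B\bigcap X|$, and therefore
\centerline{$r^{*}_{M}(X)=|X|-\min\{|B\bigcap X|:B\in\mathbf{B}(M)\}$.}

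Finally I would evaluate the remaining minimum using the fact that every base of $M$ has cardinality $r_{M}(U)$: for each base $B$, $|B\bigcap X|=|B|-|B\bigcap X^{c}|=r_{M}(U)-|B\bigcap X^{c}|$, so minimizing $|B\bigcap X|$ over $B$ is the same as maximizing $|B\bigcap X^{c}|$ over $B$, which by the characterization applied to $M$ and $X^{c}$ equals $r_{M}(X^{c})$. Hence $\min\{|B\bigcap X|:B\in\mathbf{B}(M)\}=r_{M}(U)-r_{M}(X^{c})$, and substituting into the previous display gives $r^{*}_{M}(X)=|X|-r_{M}(U)+r_{M}(X^{c})$, as required. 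The only genuine obstacle is the base-characterization of rank, specifically the step that extends a maximal independent subset of an arbitrary set to a base (which is exactly where (I3) is used); the rest is bookkeeping with cardinalities and complements. One also uses implicitly that $M^{*}$ is indeed a matroid, which is part of the definition of the dual.
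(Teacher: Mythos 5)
Your proof is correct. Note that the paper itself offers no argument for this proposition: it is quoted as a standard fact of matroid theory (from the cited textbook) immediately after the definition of the dual matroid, so there is no ``paper proof'' to compare against; what you have written is essentially the classical textbook derivation. Your route --- first establishing $r_{N}(Y)=\max\{|B\bigcap Y|:B\in\mathbf{B}(N)\}$ together with the equicardinality of bases, then translating $r^{*}_{M}(X)$ into $|X|-\min\{|B\bigcap X|:B\in\mathbf{B}(M)\}$ via $B^{c}\bigcap X=X-B$, and finally converting the minimum into $r_{M}(U)-r_{M}(X^{c})$ --- is sound, and you correctly isolate the two nontrivial ingredients: the extension of an independent set to a base via (I3), and the fact (built into the paper's definition) that $\{B^{c}:B\in\mathbf{B}(M)\}$ is the base family of a genuine matroid, which licenses applying the rank characterization to $M^{*}$.

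One small wording point: when you say ``take a maximal independent subset $I$ of $Y$, so $|I|=r_{N}(Y)$,'' maximality alone does not immediately give maximum cardinality; either invoke the exchange property within $Y$ to see that all maximal independent subsets of $Y$ have the same size, or simply choose $I$ to be an independent subset of $Y$ of maximum cardinality (hence $|I|=r_{N}(Y)$ by the definition of the rank function), which is all the rest of your argument uses. With that phrasing adjusted, the proof is complete.
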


Through the independent set axioms of a matroid, Liu and Chen~\cite{LiuChen94Matroid} proposed partition matroids induced by partitions with respect to a group of nonnegative integers.

\begin{definition}(Partition matroid~\cite{LiuChen94Matroid})
\label{D:partitionmatroid}
Let $\mathbf{P}=\{P_{1}, \cdots,$ $ P_{m}\}$ be a partition on $U$ and $k_{1}, \cdots, k_{m}$ be a group of nonnegative integers.
Then $M(\mathbf{P}; k_{1}, \cdots, k_{m})=(U, \mathbf{I}(\mathbf{P}; k_{1},$ $ \cdots, k_{m}))$ is a matroid where $\mathbf{I}(\mathbf{P}; k_{1}, \cdots$, $ k_{m})=\{X\subseteq U: |X\bigcap P_{i}|\leq k_{i}, 1\leq i\leq m\}$, and it is called the partition matroid induced by $\mathbf{P}$ with respect to $k_{1}, \cdots, k_{m}$.
\end{definition}

\section{Matroidal structure of partition}
\label{S:matroidalstructure}
In this section, we propose a matroid which is induced by a partition through the circuit axioms.

\begin{definition}
\label{D:matroidinducedbypartition}
Let $\mathbf{P}$ be a partition on $U$. We define a family $\mathbf{C}_{\mathbf{P}}$ of subsets of $U$, where $\mathbf{C}_{\mathbf{P}}=\mathbf{P}$.
\end{definition}

According to the above definition, a family of subsets of a universe is equal to a partition on the universe.
In fact, the family of subsets of the universe satisfies the circuit axioms of matroids.

\begin{proposition}
\label{P:Psatisfiescircuitaxoms}
Let $\mathbf{P}$ be a partition on $U$. Then $\mathbf{C}_{\mathbf{P}}$ satisfies (C1), (C2) and (C3).
\end{proposition}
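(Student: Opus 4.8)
The plan is to verify the three circuit axioms (C1), (C2), (C3) directly from Proposition~\ref{P:circuitaxioms} using only the defining properties of a partition, namely that $\mathbf{P} = \{P_1, \dots, P_m\}$ consists of nonempty, pairwise disjoint subsets of $U$ whose union is $U$, together with the identification $\mathbf{C}_{\mathbf{P}} = \mathbf{P}$ from Definition~\ref{D:matroidinducedbypartition}.

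For (C1), I would simply note that every block of a partition is nonempty, so $\emptyset \notin \mathbf{P} = \mathbf{C}_{\mathbf{P}}$. For (C2), suppose $C, C' \in \mathbf{C}_{\mathbf{P}}$ with $C' \subseteq C$; both are blocks of the partition, and since distinct blocks are disjoint, $C' \subseteq C$ together with $C' \neq \emptyset$ forces $C' = C$ (two blocks sharing any element must coincide). For (C3), the key observation is that the hypothesis is \emph{vacuous}: if $C_1, C_2 \in \mathbf{C}_{\mathbf{P}}$ with $C_1 \neq C_2$, then as distinct blocks of a partition they are disjoint, so there is no $u \in C_1 \cap C_2$; hence the implication in (C3) holds trivially.

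I expect the only mild subtlety — not really an obstacle — to be making explicit why the equivalence-class structure gives the disjointness used in both (C2) and (C3); this is immediate from the definition of a partition, so the proof is essentially a short case-by-case check with no real computation. The writeup should state each of (C1), (C2), (C3) in turn and dispatch it in a sentence, emphasizing in particular that (C3) holds because its antecedent can never be satisfied when the members of $\mathbf{C}_{\mathbf{P}}$ are pairwise disjoint.
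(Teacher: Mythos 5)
Your proof is correct and follows exactly the route the paper intends: the paper merely declares the verification of (C1)--(C3) ``straightforward'' from Definition~\ref{D:matroidinducedbypartition} and Proposition~\ref{P:circuitaxioms}, and you supply precisely that direct check, correctly noting that nonemptiness of blocks gives (C1), pairwise disjointness gives (C2), and (C3) holds vacuously since distinct blocks never intersect.
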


\begin{proof}
According to Definition~\ref{D:matroidinducedbypartition} and Proposition~\ref{P:circuitaxioms}, it is straightforward.
\end{proof}

According to Proposition~\ref{P:circuitaxioms}, a matroid and its circuits determine each other.
Therefore, the family of subsets of the universe induced by a partition can generate a matroid.

\begin{definition}(Partition-circuit matroid)
Let $\mathbf{P}$ be a partition on $U$.
The matroid, whose family of all circuits is $\mathbf{C}_{\mathbf{P}}$, is denoted by $M_{\mathbf{P}}=(U, \mathbf{I}_{\mathbf{P}})$ and called partition-circuit matroid.
We say $M_{\mathbf{P}}=(U, \mathbf{I}_{\mathbf{P}})$ is the matroid induced by $\mathbf{P}$, where $\mathbf{I}_{\mathbf{P}}=Opp(Upp(\mathbf{C}_{\mathbf{P}}))$.
\end{definition}

According to Definition~\ref{D:matroidinducedbypartition}, the family of all circuits of a matroid is equal to a partition on the same universe.
Then we will represent the independent sets of the matroid from the viewpoint of the partition.

\begin{proposition}
\label{P:representedbypartitionmatroid}
Let $\mathbf{P}$ be a partition on $U$ and $M_{\mathbf{P}}=(U, \mathbf{I}_{\mathbf{P}})$ the partition-circuit matroid induced by $\mathbf{P}$.
Then $\mathbf{I}_{\mathbf{P}}=\{X\subseteq U:\forall P\in\mathbf{P}, |X\bigcap P|\leq |P|-1\}$.
\end{proposition}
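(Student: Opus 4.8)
The plan is to unwind the two chained operators in the definition $\mathbf{I}_{\mathbf{P}}=Opp(Upp(\mathbf{C}_{\mathbf{P}}))$ and match the result against the claimed set $\{X\subseteq U:\forall P\in\mathbf{P}, |X\cap P|\le |P|-1\}$. Since $\mathbf{C}_{\mathbf{P}}=\mathbf{P}$ by Definition~\ref{D:matroidinducedbypartition}, the set $Upp(\mathbf{C}_{\mathbf{P}})$ is exactly $\{X\subseteq U:\exists P\in\mathbf{P}, P\subseteq X\}$, i.e.\ the family of subsets that contain at least one block of the partition in full. Taking $Opp$ of this gives $\mathbf{I}_{\mathbf{P}}=\{X\subseteq U:\forall P\in\mathbf{P}, P\not\subseteq X\}$. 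So the whole proof reduces to the elementary observation that, for a block $P$ of the partition, ``$P\not\subseteq X$'' is equivalent to ``$|X\cap P|\le |P|-1$'', which holds because $X\cap P\subseteq P$ and $P$ is finite: $P\subseteq X$ iff $X\cap P=P$ iff $|X\cap P|=|P|$, and the negation of $|X\cap P|=|P|$ under $|X\cap P|\le|P|$ is precisely $|X\cap P|\le|P|-1$.

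Concretely, I would structure it as a short double-inclusion argument. First I would fix an arbitrary $X\subseteq U$ and show $X\in\mathbf{I}_{\mathbf{P}}\Rightarrow \forall P\in\mathbf{P},|X\cap P|\le|P|-1$: if $X\in\mathbf{I}_{\mathbf{P}}=Opp(Upp(\mathbf{C}_{\mathbf{P}}))$ then $X\notin Upp(\mathbf{C}_{\mathbf{P}})$, so no $P\in\mathbf{P}$ satisfies $P\subseteq X$; hence for every $P$ we have $X\cap P\subsetneq P$, and since everything is finite, $|X\cap P|<|P|$, i.e.\ $|X\cap P|\le|P|-1$. For the reverse inclusion, suppose $|X\cap P|\le|P|-1$ for all $P\in\mathbf{P}$; then $X\cap P\ne P$ for every $P$, so $P\not\subseteq X$ for every $P$, hence $X\notin Upp(\mathbf{C}_{\mathbf{P}})$, which means $X\in Opp(Upp(\mathbf{C}_{\mathbf{P}}))=\mathbf{I}_{\mathbf{P}}$.

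There is essentially no hard part here; the only thing to be careful about is the chain of definitions, namely that $Opp(Upp(\cdot))$ really does produce the "no block is a subset" condition rather than something involving $Min$ or the circuit family of the matroid directly. One mild subtlety worth a sentence is that $|P|\ge 1$ for every block $P$ (blocks of a partition are nonempty), so $|P|-1\ge 0$ and the bound $|X\cap P|\le|P|-1$ is meaningful; in particular a singleton block $\{x\}$ forces $x\notin X$ for every independent set $X$, which is the expected behaviour since a singleton block is a circuit of size one (a loop). I would close by remarking that this shows $M_{\mathbf{P}}$ coincides with the partition matroid $M(\mathbf{P};|P_1|-1,\dots,|P_m|-1)$ of Definition~\ref{D:partitionmatroid}, which is a convenient bridge for the rough-set characterisations that follow.
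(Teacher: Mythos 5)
Your proof is correct and follows essentially the same route as the paper: both unwind $\mathbf{I}_{\mathbf{P}}=Opp(Upp(\mathbf{C}_{\mathbf{P}}))$ with $\mathbf{C}_{\mathbf{P}}=\mathbf{P}$ and establish the double inclusion via the elementary equivalence $P\subseteq X \Leftrightarrow |X\cap P|=|P|$ (the paper just phrases each inclusion contrapositively). Your closing remarks about nonempty blocks and the identification with the partition matroid $M(\mathbf{P};|P_1|-1,\dots,|P_m|-1)$ are harmless extras that the paper defers to Proposition~\ref{P:partitoin-circuitmatroidispartitionone}.
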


\begin{proof}
We only need to prove $Opp(Upp(\mathbf{C}_{\mathbf{P}}))=\{X\subseteq U:\forall P\in\mathbf{P}, |X\bigcap P|\leq |P|-1\}$.
For all $X\notin\{X\subseteq U:\forall P\in\mathbf{P}, |X\bigcap P|\leq |P|-1\}$, then there exists $P\in\mathbf{P}$ such that $|X\bigcap P|\geq |P|$.
And $X\bigcap P\subseteq P$, then $X\bigcap P=P$, i.e., $P\subseteq X$.
According to Definition~\ref{D:matroidinducedbypartition}, $\mathbf{C}_{\mathbf{P}}=\mathbf{P}$.
Therefore $X\in Upp(\mathbf{C}_{\mathbf{P}})$, i.e., $X\notin Opp(Upp(\mathbf{C}_{\mathbf{P}}))$.
This proves $Opp(Upp(\mathbf{C}_{\mathbf{P}}))\subseteq\{X\subseteq U:\forall P\in\mathbf{P}, |X\bigcap P|\leq |P|-1\}$.
Conversely, for all $X\notin Opp(Upp(\mathbf{C}_{\mathbf{P}}))$, i.e., $X\in Upp(\mathbf{C}_{\mathbf{P}})$, according to Definition~\ref{D:matroidinducedbypartition}, there exists $P\in\mathbf{P}$ such that $P\subseteq X$.
Thus $|X\bigcap P|=|P|$.
So $X\notin\{X\subseteq U:\forall P\in\mathbf{P}, |X\bigcap P|\leq |P|-1\}$.
This proves that $Opp(Upp(\mathbf{C}_{\mathbf{P}}))\supseteq\{X\subseteq U:\forall P\in\mathbf{P}, |X\bigcap P|\leq |P|-1\}$.
To sum up, this completes the proof.
\end{proof}

According to Proposition~\ref{P:representedbypartitionmatroid}, the independent sets of a partition-circuit matroid can be expressed by a group of nonnegative integers.
Then we obtain a proposition in the the following.

\begin{proposition}
\label{P:partitoin-circuitmatroidispartitionone}
Let $\mathbf{P}$ be a partition on $U$ and $M_{\mathbf{P}}=(U, \mathbf{I}_{\mathbf{P}})$ the partition-circuit matroid induced by $\mathbf{P}$.
Then, $M_{\mathbf{P}}$ is a partition matroid.
\end{proposition}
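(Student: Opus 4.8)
The plan is to exhibit $M_{\mathbf{P}}$ explicitly as one of the partition matroids of Definition~\ref{D:partitionmatroid}, for a suitable choice of the integer parameters. The key observation is that Proposition~\ref{P:representedbypartitionmatroid} already supplies a description of the independent sets of $M_{\mathbf{P}}$ in precisely the form appearing in Definition~\ref{D:partitionmatroid}: namely $\mathbf{I}_{\mathbf{P}}=\{X\subseteq U:\forall P\in\mathbf{P},\ |X\bigcap P|\leq |P|-1\}$.

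First I would write $\mathbf{P}=\{P_{1},\cdots,P_{m}\}$ and set $k_{i}=|P_{i}|-1$ for each $1\leq i\leq m$. Since $\mathbf{P}$ is a partition of $U$, every block $P_{i}$ is nonempty, hence $|P_{i}|\geq 1$, and therefore each $k_{i}=|P_{i}|-1$ is a nonnegative integer. So $k_{1},\cdots,k_{m}$ is a legitimate choice of parameters for Definition~\ref{D:partitionmatroid}, and the partition matroid $M(\mathbf{P};k_{1},\cdots,k_{m})$ is well defined.

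Next I would simply compare the two independent-set families. By Definition~\ref{D:partitionmatroid}, $\mathbf{I}(\mathbf{P};k_{1},\cdots,k_{m})=\{X\subseteq U:|X\bigcap P_{i}|\leq k_{i},\ 1\leq i\leq m\}=\{X\subseteq U:|X\bigcap P_{i}|\leq |P_{i}|-1,\ 1\leq i\leq m\}$, which is exactly $\mathbf{I}_{\mathbf{P}}$ by Proposition~\ref{P:representedbypartitionmatroid}. Hence $M_{\mathbf{P}}=M(\mathbf{P};|P_{1}|-1,\cdots,|P_{m}|-1)$, so $M_{\mathbf{P}}$ is a partition matroid, as required.

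There is essentially no hard step here; the substance of the proposition has already been carried out in Proposition~\ref{P:representedbypartitionmatroid}, and the only point to watch is that the parameters $k_{i}$ must be nonnegative integers, which is guaranteed exactly because the blocks of a partition are nonempty. If one wished to avoid quoting Proposition~\ref{P:representedbypartitionmatroid}, one could instead argue directly from $\mathbf{I}_{\mathbf{P}}=Opp(Upp(\mathbf{C}_{\mathbf{P}}))$ and $\mathbf{C}_{\mathbf{P}}=\mathbf{P}$ that $X\in\mathbf{I}_{\mathbf{P}}$ iff no block of $\mathbf{P}$ is contained in $X$ iff $|X\bigcap P_{i}|\leq|P_{i}|-1$ for all $i$, but reusing the earlier proposition is cleaner.
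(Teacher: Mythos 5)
Your proof is correct and follows the same route as the paper, which simply cites Definition~\ref{D:partitionmatroid} and Proposition~\ref{P:representedbypartitionmatroid}; you have merely made explicit the choice $k_{i}=|P_{i}|-1$ and the observation that these are nonnegative because the blocks of a partition are nonempty. Nothing further is needed.
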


\begin{proof}
According to Definition~\ref{D:partitionmatroid} and Proposition~\ref{P:representedbypartitionmatroid}, it is straigtforward.
\end{proof}

Proposition~\ref{P:Psatisfiescircuitaxoms} establishes a matroidal structure of a partition.
A partition coincides with an equivalence relation.
Then, some characteristics of the matroid induced by a partition are represented through rough sets.
Firstly, the independent sets of the matroid are expressed by approximation operators of rough sets.

\begin{proposition}
\label{P:independentsetrepresentedbylower}
Let $\mathbf{P}$ be a partition on $U$ and $M_{\mathbf{P}}=(U, \mathbf{I}_{\mathbf{P}})$ the partition-circuit matroid induced by $\mathbf{P}$.
Let $R$ be an equivalence relation on $U$ and $\mathbf{P}=U/R$.
Then $\mathbf{I}_{\mathbf{P}}=\{X\subseteq U:\underline{R}(X)=\emptyset\}$.
\end{proposition}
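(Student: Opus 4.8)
The plan is to combine Proposition~\ref{P:representedbypartitionmatroid}, which characterizes $\mathbf{I}_{\mathbf{P}}$ as $\{X\subseteq U:\forall P\in\mathbf{P}, |X\cap P|\leq|P|-1\}$, with the definition of the lower approximation $\underline{R}(X)=\bigcup\{P\in U/R:P\subseteq X\}$. Since $\mathbf{P}=U/R$, the key observation is that for a block $P$, the condition $|X\cap P|\leq|P|-1$ is equivalent to $P\not\subseteq X$ (because $X\cap P\subseteq P$ forces $|X\cap P|=|P|$ iff $X\cap P=P$ iff $P\subseteq X$). So the real content is the elementary set-theoretic equivalence: $\underline{R}(X)=\emptyset$ if and only if no block of $\mathbf{P}$ is contained in $X$.

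First I would prove the forward inclusion: take $X\in\mathbf{I}_{\mathbf{P}}$. By Proposition~\ref{P:representedbypartitionmatroid}, for every $P\in\mathbf{P}$ we have $|X\cap P|\leq|P|-1$, hence $P\not\subseteq X$. Therefore the set $\{P\in U/R:P\subseteq X\}$ is empty, so its union $\underline{R}(X)$ is $\emptyset$, giving $X\in\{X\subseteq U:\underline{R}(X)=\emptyset\}$.

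Second I would prove the reverse inclusion: suppose $\underline{R}(X)=\emptyset$. If some block $P\in\mathbf{P}$ satisfied $P\subseteq X$, then $P\subseteq\underline{R}(X)=\emptyset$, and since blocks of a partition are nonempty this is a contradiction; hence $P\not\subseteq X$ for all $P\in\mathbf{P}$. Then $X\cap P\subsetneq P$, so $|X\cap P|\leq|P|-1$ for every $P\in\mathbf{P}$, and by Proposition~\ref{P:representedbypartitionmatroid} we conclude $X\in\mathbf{I}_{\mathbf{P}}$. Combining the two inclusions finishes the proof.

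There is no serious obstacle here; the statement is essentially a translation between the matroid-theoretic description already established in Proposition~\ref{P:representedbypartitionmatroid} and the rough-set language. The only point that needs a moment's care is the use of the fact that every equivalence class is nonempty (so that $P\subseteq\emptyset$ is impossible), which is guaranteed because a partition consists of nonempty sets by definition.
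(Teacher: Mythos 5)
Your proof is correct. It differs from the paper's argument only in its starting point: the paper works directly from the definition $\mathbf{I}_{\mathbf{P}}=Opp(Upp(\mathbf{C}_{\mathbf{P}}))$ and proves both inclusions by contraposition (if $\underline{R}(X)\neq\emptyset$ then some block $P$ satisfies $P\subseteq X$, hence $X\in Upp(\mathbf{C}_{\mathbf{P}})$, and conversely), whereas you route through the cardinality characterization of Proposition~\ref{P:representedbypartitionmatroid}, $\mathbf{I}_{\mathbf{P}}=\{X\subseteq U:\forall P\in\mathbf{P}, |X\bigcap P|\leq|P|-1\}$, and then translate $|X\bigcap P|\leq|P|-1$ into $P\not\subseteq X$. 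The substantive content is the same in both cases, namely the equivalence $\underline{R}(X)=\emptyset$ if and only if no block of $\mathbf{P}=U/R$ is contained in $X$; your version buys a slightly shorter argument by reusing an already-proved proposition, while the paper's version is self-contained with respect to the definition of $\mathbf{I}_{\mathbf{P}}$. Your explicit remark that equivalence classes are nonempty (so $P\subseteq\underline{R}(X)=\emptyset$ is impossible) is a point the paper glosses over, and it is correctly handled.
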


\begin{proof}
We only need to prove that $Opp(Upp(\mathbf{C}_{\mathbf{P}}))=\{X\subseteq U:\underline R(X)=\emptyset\}$.
For all $X\notin\{X\subseteq U:\underline{R}(X)=\emptyset\}$, there exists $P\in U/R=\mathbf{P}$ such that $P\subseteq X$.
According to Definition~\ref{D:matroidinducedbypartition}, $\mathbf{C}_{\mathbf{P}}=\mathbf{P}$.
Therefore, $X\in Upp(\mathbf{C}_{\mathbf{P}})$, i.e., $X\notin Opp(Upp(\mathbf{C}_{\mathbf{P}}))$.
This proves that $Opp(Upp(\mathbf{C}_{\mathbf{P}}))\subseteq\{X\subseteq U:\underline{R}(X)=\emptyset\}$.
Conversely, for all $X\notin Opp(Upp(\mathbf{C}_{\mathbf{P}}))$, i.e., $X\in Upp(\mathbf{C}_{\mathbf{P}})$, there exists $P\in\mathbf{C}_{\mathbf{P}}$ such that $P\subseteq X$.
Therefore $\underline{R}(X)\neq\emptyset$, i.e., $X\notin\{X\subseteq U:\underline{R}(X)=\emptyset\}$.
This proves $Opp(Upp(\mathbf{C}_{\mathbf{P}}))\supseteq\{X\subseteq U:\underline{R}(X)=\emptyset\}$.
To sum up, this completes the proof.
\end{proof}

\begin{corollary}
Let $\mathbf{P}$ be a partition on $U$ and $M_{\mathbf{P}}=(U, \mathbf{I}_{\mathbf{P}})$ the partition-circuit matroid induced by $\mathbf{P}$.
Let $R$ be an equivalence relation on $U$ and $\mathbf{P}=U/R$.
Then $\mathbf{I}_{\mathbf{P}}=\{X\subseteq U:\overline{R}(X^{c})=U\}$.
\end{corollary}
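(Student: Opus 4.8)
The plan is to derive this corollary directly from Proposition~\ref{P:independentsetrepresentedbylower}, which already establishes $\mathbf{I}_{\mathbf{P}}=\{X\subseteq U:\underline{R}(X)=\emptyset\}$. Since the left-hand side is the same in both statements, it suffices to show that for any $X\subseteq U$, the condition $\underline{R}(X)=\emptyset$ is equivalent to $\overline{R}(X^{c})=U$. This is purely a rough-set computation and should follow from the duality property (3H), namely $\overline{R}(X^{c})=(\underline{R}(X))^{c}$, listed in the preliminaries.

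First I would recall property (3H): $\overline{R}(X^{c})=(\underline{R}(X))^{c}$. Then I would chain equivalences: $\underline{R}(X)=\emptyset$ if and only if $(\underline{R}(X))^{c}=U$ (taking complements in $U$, which is a bijection on subsets reversing $\emptyset$ and $U$), if and only if $\overline{R}(X^{c})=U$ by (3H). Combining this with Proposition~\ref{P:independentsetrepresentedbylower} gives $\mathbf{I}_{\mathbf{P}}=\{X\subseteq U:\underline{R}(X)=\emptyset\}=\{X\subseteq U:\overline{R}(X^{c})=U\}$, which is exactly the claim. One could alternatively give a direct argument: $\overline{R}(X^{c})=U$ means every equivalence class $P$ meets $X^{c}$, i.e. no class $P$ satisfies $P\subseteq X$, i.e. $\underline{R}(X)=\emptyset$; but invoking (3H) is cleaner and shorter.

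I do not anticipate a genuine obstacle here, since the corollary is essentially the complement-dual restatement of the preceding proposition. The only point requiring a moment of care is the elementary set-theoretic fact that $S=\emptyset \iff S^{c}=U$ for $S\subseteq U$, and the correct application of (3H) with $X^{c}$ in place of $X$ (noting $(X^{c})^{c}=X$). If one wanted to be fully self-contained without citing (3H), the main minor step would be verifying $\overline{R}(X^{c})=(\underline{R}(X))^{c}$ from the definitions, i.e. $u\notin\underline{R}(X)$ iff the class $[u]_R$ of $u$ is not contained in $X$ iff $[u]_R\cap X^{c}\neq\emptyset$ iff $u\in\overline{R}(X^{c})$; this is routine. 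I would therefore present the proof in two or three lines: invoke Proposition~\ref{P:independentsetrepresentedbylower}, then reduce $\underline{R}(X)=\emptyset$ to $\overline{R}(X^{c})=U$ via (3H) and the complementation bijection, and conclude.
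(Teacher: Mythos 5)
Your proof is correct and follows the route the paper intends: the corollary is stated without proof as an immediate consequence of Proposition~\ref{P:independentsetrepresentedbylower} together with the duality property (3H) (equivalently (3L)), which is exactly your argument. The reduction of $\underline{R}(X)=\emptyset$ to $\overline{R}(X^{c})=U$ via complementation is handled properly, so nothing is missing.
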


\section{Partition-circuit matroid and its dual matroid}
\label{S:partition-circuitanddualmatroid}
Section~\ref{S:matroidalstructure} establishes a matroidal structure of a partition.
The matroid induced by a partition is called a partition-circuit matroid.
In order to investigate the characteristics of partition-circuit matroids and the dual matroids of them, we propose the lower approximation number and introduce the upper approximation number~\cite{WangZhu11Matroidal,WangZhuMin11Transversal,ZhuWang11Matroidal}.

\subsection{Lower approximation number and upper approximation number}
The upper approximation number proposed in~\cite{WangZhu11Matroidal,WangZhuMin11Transversal,ZhuWang11Matroidal} is based on coverings.
Similarly, we propose the lower approximation number.

\begin{definition}(Upper approximation number~\cite{WangZhu11Matroidal,WangZhuMin11Transversal,ZhuWang11Matroidal}, lower approximation number)
\label{D:lowerandupperapproximationnumbers}
Let $\mathbf{C}$ be a covering on $U$.
For all $X\subseteq U$,
\begin{center}
$\underline{f}_{\mathbf{C}}(X)=|\{K\in\mathbf{C}:K\subseteq X\}|$;\\
~~~~$\overline{f}_{\mathbf{C}}(X)=|\{K\in\mathbf{C}:K\bigcap X\neq\emptyset\}|$
\end{center}
are the lower and upper approximation numbers of $X$ with respect to $\mathbf{C}$, respectively.
We omit the subscript $\mathbf{C}$ when there is no confusion.
\end{definition}

Properties of the upper approximation number are studied in~\cite{WangZhu11Matroidal,WangZhuMin11Transversal,ZhuWang11Matroidal}.
Then, in this paper, we only investigate the properties of the lower approximation number and the relationship between it and the upper approximation number.

\begin{proposition}
\label{P:propertiesoflowernumber}
Let $\mathbf{C}$ be a covering on $U$ and $X, Y\subseteq U$.
The following properties hold:\\
(1) $\underline{f}(\emptyset)=0$;\\
(2) $\underline{f}(X)\leq\underline{f}(Y)$ if $X\subseteq Y$;\\
(3) $\underline{f}(X)+\underline{f}(Y)\leq\underline{f}(X\bigcup Y)+\underline{f}(X\bigcap Y)$;\\
(4) $\underline{f}(X)\leq\overline{f}(X)$;\\
(5) $\underline{f}(X)+\overline{f}(X^{c})=|\mathbf{C}|$.
\end{proposition}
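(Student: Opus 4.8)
The plan is to prove the five items essentially by unwinding Definition~\ref{D:lowerandupperapproximationnumbers}, treating $\underline{f}$ and $\overline{f}$ as counting functions on the finite family $\mathbf{C}$. For item (1), the condition $K\subseteq\emptyset$ forces $K=\emptyset$, but $\emptyset\notin\mathbf{C}$ since $\mathbf{C}$ is a covering, so the counted set is empty and $\underline{f}(\emptyset)=0$. For item (2), if $X\subseteq Y$ then $\{K\in\mathbf{C}:K\subseteq X\}\subseteq\{K\in\mathbf{C}:K\subseteq Y\}$, and taking cardinalities gives the monotonicity. These two are immediate.

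For item (4), I would observe that whenever $K\subseteq X$ and $K\neq\emptyset$ (which holds for every $K\in\mathbf{C}$), we have $K\cap X=K\neq\emptyset$; hence $\{K\in\mathbf{C}:K\subseteq X\}\subseteq\{K\in\mathbf{C}:K\cap X\neq\emptyset\}$, and cardinalities give $\underline{f}(X)\leq\overline{f}(X)$. For item (5), the key observation is that for any $K\in\mathbf{C}$, the two conditions ``$K\subseteq X$'' and ``$K\cap X^{c}\neq\emptyset$'' are exactly complementary: $K\cap X^{c}=\emptyset$ iff $K\subseteq X$. Therefore the family $\mathbf{C}$ is partitioned into $\{K\in\mathbf{C}:K\subseteq X\}$ and $\{K\in\mathbf{C}:K\cap X^{c}\neq\emptyset\}$, and summing the two cardinalities yields $\underline{f}(X)+\overline{f}(X^{c})=|\mathbf{C}|$.

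Item (3), the submodularity-type inequality, is the main obstacle and needs a genuine argument rather than a one-line set inclusion. I would prove it by a membership-counting argument: for each $K\in\mathbf{C}$, let $a_K$ be the indicator of $K\subseteq X$, $b_K$ the indicator of $K\subseteq Y$, $c_K$ the indicator of $K\subseteq X\cup Y$, and $d_K$ the indicator of $K\subseteq X\cap Y$. Then $\underline{f}(X)=\sum_K a_K$, etc., so it suffices to show $a_K+b_K\leq c_K+d_K$ for every $K$. Since $X\cap Y\subseteq X\subseteq X\cup Y$ and similarly for $Y$, we have $d_K\leq a_K\leq c_K$ and $d_K\leq b_K\leq c_K$. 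The only case in which $a_K+b_K$ could exceed $c_K+d_K$ is $a_K=b_K=1$, $d_K=0$; but $K\subseteq X$ and $K\subseteq Y$ together give $K\subseteq X\cap Y$, so $d_K=1$, a contradiction. Hence $a_K+b_K\leq c_K+d_K$ always holds, and summing over $K\in\mathbf{C}$ completes the proof. An alternative phrasing uses the set identity $\{K:K\subseteq X\}\cap\{K:K\subseteq Y\}=\{K:K\subseteq X\cap Y\}$ together with $\{K:K\subseteq X\}\cup\{K:K\subseteq Y\}\subseteq\{K:K\subseteq X\cup Y\}$ and the inclusion–exclusion count $|A|+|B|=|A\cup B|+|A\cap B|$, which I expect to be the cleanest way to present it.
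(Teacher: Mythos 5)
Your proposal is correct and follows essentially the same route as the paper: items (1), (2), (4) by direct inclusion of the counted families, item (5) by observing that $\{K\in\mathbf{C}:K\subseteq X\}$ and $\{K\in\mathbf{C}:K\bigcap X^{c}\neq\emptyset\}$ partition $\mathbf{C}$, and item (3) via the fact that the two families for $X$ and $Y$ have union contained in the family for $X\bigcup Y$ and intersection equal to the family for $X\bigcap Y$ (your indicator-summing version and your ``alternative phrasing'' are just two writings of the paper's own argument). No gaps; your explicit use of $\emptyset\notin\mathbf{C}$ in (1) and $K\neq\emptyset$ in (4) only makes precise what the paper leaves as ``straightforward.''
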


\begin{proof}
(1) and (2) are straightforward.\\
(3): we need to prove $\underline{f}(X)+\underline{f}(Y)\leq\underline{f}(X\bigcup Y)+\underline{f}(X\bigcap Y)$, i.e., $|\{K\in\mathbf{C}:K\subseteq X\}|+|\{K\in\mathbf{C}:K\subseteq Y\}|\leq|\{K\in\mathbf{C}:K\subseteq X\bigcup Y\}|+|\{K\in\mathbf{C}:K\subseteq X\bigcap Y\}|$.
For all $S\in\{K\in\mathbf{C}:K\subseteq X\}$ or $S\in\{K\in\mathbf{C}:K\subseteq Y\}$, then $S\in\{K\in\mathbf{C}:K\subseteq X\bigcup Y\}$.
Similarly, for all $S\in\{K\in\mathbf{C}:K\subseteq X\}$ and $S\in\{K\in\mathbf{C}:K\subseteq Y\}$, then $S\in\{K\in\mathbf{C}:K\subseteq X\bigcap Y\}$.
This proves $\underline{f}(X)+\underline{f}(Y)\leq\underline{f}(X\bigcup Y)+\underline{f}(X\bigcap Y)$.\\
(4): According to Definition~\ref{D:lowerandupperapproximationnumbers}, $\underline{f}(X)\leq\overline{f}(X)$ is straightforward.\\
(5): We only need to prove $\underline{f}(X)+\overline{f}(X^{c})\leq|\mathbf{C}|$ and $\underline{f}(X)+\overline{f}(X^{c})\geq|\mathbf{C}|$.
For all $S\in\{K\in\mathbf{C}:K\subseteq X\}$, $S\bigcap X^{c}=\emptyset$, then $S\notin\{K\in\mathbf{C}:K\bigcap X^{c}\neq\emptyset\}$.
Similarly, for all $S\in\{K\in\mathbf{C}:K\bigcap X^{c}\neq\emptyset\}$, $S\nsubseteq X$, then $S\notin\{K\in\mathbf{C}:K\subseteq X\}$.
Therefore $|\{K\in\mathbf{C}:K\subseteq X\}|+|\{K\in\mathbf{C}:K\bigcap X^{c}\neq\emptyset\}|\leq |\mathbf{C}|$, i.e., $\underline{f}(X)+\overline{f}(X^{c})\leq|\mathbf{C}|$.
Conversely, $\underline{f}(X)+\overline{f}(X^{c})\geq|\mathbf{C}|$, we only need to prove $\{K\in\mathbf{C}:K\subseteq X\}\bigcup\{K\in\mathbf{C}:K\bigcap X^{c}\neq\emptyset\}\supseteq\mathbf{C}$, i.e., $\mathbf{C}-\{K\in\mathbf{C}:K\subseteq X\}\subseteq\{K\in\mathbf{C}:K\bigcap X^{c}\neq\emptyset\}$.
Suppose that there exists $S\in\mathbf{C}-\{K\in\mathbf{C}:K\subseteq X\}$ such that $S\bigcap X^{c}=\emptyset$, then $S\subseteq X$, which is contradictory with the condition.
Therefore, $\mathbf{C}-\{K\in\mathbf{C}:K\subseteq X\}\subseteq\{K\in\mathbf{C}:K\bigcap X^{c}\neq\emptyset\}$.
This proves $\underline{f}(X)+\overline{f}(X^{c})\geq|\mathbf{C}|$.
To sum up, this completes the proof.
\end{proof}

Partitions are a special kind of coverings.
In this paper, we propose partition-circuit matroids induced by partitions.
Therefore, in this paper, the lower and upper approximation numbers are based on partitions unless otherwise stated.

\subsection{Partition-circuit matroid through lower approximation number}
On a universe, a partition coincides with an equivalence relation.
According to Proposition~\ref{P:independentsetrepresentedbylower}, the lower approximation of any independent set in the partition-circuit matroid induced by a partition is equal to empty set with respect to the equivalence relation which coincides with the partition.
The independent sets of a partition-circuit matroid induced by a partition can be well expressed by the lower approximation number with respect to the partition.

\begin{proposition}
\label{P:independentsetrepresentlowernumber}
Let $\mathbf{P}$ be a partition on $U$ and $M_{\mathbf{P}}$ the partition-circuit matroid induced by $\mathbf{P}$.
Then, $\mathbf{I}_{\mathbf{P}}=\{X\subseteq U:\underline{f}(X)=0\}$.
\end{proposition}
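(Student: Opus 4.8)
The plan is to reduce the statement to results already established in the excerpt, using only the definition of the lower approximation number based on a partition. Recall from Definition~\ref{D:lowerandupperapproximationnumbers} that, for the partition $\mathbf{P}$ viewed as a covering, $\underline{f}(X)=|\{K\in\mathbf{P}:K\subseteq X\}|$. Since this is the cardinality of a finite family, we have $\underline{f}(X)=0$ if and only if $\{K\in\mathbf{P}:K\subseteq X\}=\emptyset$, i.e., no block of $\mathbf{P}$ is a subset of $X$. So the whole proof is the bookkeeping identity ``$\underline{f}(X)=0$'' $\Longleftrightarrow$ ``$\forall P\in\mathbf{P},\ P\not\subseteq X$''.

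The first route I would take is through Proposition~\ref{P:representedbypartitionmatroid}, which gives $\mathbf{I}_{\mathbf{P}}=\{X\subseteq U:\forall P\in\mathbf{P},\ |X\bigcap P|\leq |P|-1\}$. For any block $P\in\mathbf{P}$ we have $X\bigcap P\subseteq P$, hence $|X\bigcap P|\leq|P|$, with equality precisely when $X\bigcap P=P$, i.e., when $P\subseteq X$. Therefore the condition $|X\bigcap P|\leq|P|-1$ is equivalent to $P\not\subseteq X$. Quantifying over all $P\in\mathbf{P}$, membership $X\in\mathbf{I}_{\mathbf{P}}$ is equivalent to ``no block of $\mathbf{P}$ is contained in $X$'', which by the observation above is exactly $\underline{f}(X)=0$. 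This establishes both inclusions at once.

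An alternative, essentially equivalent route is through the rough-set description: by Proposition~\ref{P:independentsetrepresentedbylower}, $\mathbf{I}_{\mathbf{P}}=\{X\subseteq U:\underline{R}(X)=\emptyset\}$, where $R$ is the equivalence relation with $\mathbf{P}=U/R$. Since $\underline{R}(X)=\bigcup\{P\in U/R:P\subseteq X\}$ and every block of a partition is nonempty, this union is empty if and only if the index family $\{P\in\mathbf{P}:P\subseteq X\}$ is empty, i.e., $\underline{f}(X)=0$. Either way, one should state the short lemma that a finite family has cardinality $0$ iff it is empty, and note explicitly that blocks are nonempty (automatic for a partition of a finite nonempty $U$) so that ``empty union'' and ``empty index set'' agree.

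I do not expect any genuine obstacle here; the only point requiring a word of care is this nonemptiness of partition blocks, which is what lets us pass between the set $\underline{R}(X)$ being empty and the \emph{count} $\underline{f}(X)$ being zero. Everything else is immediate from Definition~\ref{D:lowerandupperapproximationnumbers} and Proposition~\ref{P:representedbypartitionmatroid} (or Proposition~\ref{P:independentsetrepresentedbylower}).
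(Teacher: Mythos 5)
Your proposal is correct and matches the paper's approach: the paper's own proof is just the citation ``by Proposition~\ref{P:independentsetrepresentedbylower} and Definition~\ref{D:lowerandupperapproximationnumbers}, it is straightforward,'' which is exactly your second route, with the (correct) observation about nonempty blocks spelled out. Your first route via Proposition~\ref{P:representedbypartitionmatroid} is an equally immediate minor variant, so nothing substantive differs.
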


\begin{proof}
According to Proposition~\ref{P:independentsetrepresentedbylower} and Definition~\ref{D:lowerandupperapproximationnumbers}, it is straightforward.
\end{proof}

The family of all circuits of the partition-circuit matroid induced by a partition is equal to the partition.
Then, all circuits can be represented by the lower approximation number with respect to the partition.

\begin{proposition}
Let $\mathbf{P}$ be a partition on $U$ and $M_{\mathbf{P}}$ the partition-circuit matroid induced by $\mathbf{P}$.
Then, $\mathbf{C}_{\mathbf{P}}=Min\{X\subseteq U:\underline{f}(X)=1\}$.
\end{proposition}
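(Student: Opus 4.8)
The plan is to show that the partition $\mathbf{P}=\mathbf{C}_{\mathbf{P}}$ coincides with $Min\{X\subseteq U:\underline{f}(X)=1\}$ by a direct double-inclusion argument, leaning on the fact that $\underline{f}(X)=|\{P\in\mathbf{P}:P\subseteq X\}|$ by Definition~\ref{D:lowerandupperapproximationnumbers}. First I would observe that if $X=P$ for some $P\in\mathbf{P}$, then $\underline{f}(P)=1$: the block $P$ itself is contained in $P$, and no other block $P'\neq P$ can satisfy $P'\subseteq P$ since distinct blocks of a partition are disjoint and nonempty. So $\mathbf{P}\subseteq\{X\subseteq U:\underline{f}(X)=1\}$.

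Next I would verify minimality: given $P\in\mathbf{P}$, suppose $Y\subseteq P$ with $\underline{f}(Y)=1$. Then some block $P''\subseteq Y\subseteq P$, hence $P''=P$ by disjointness, forcing $Y=P$. This shows $P\in Min\{X\subseteq U:\underline{f}(X)=1\}$, giving $\mathbf{P}\subseteq Min\{X\subseteq U:\underline{f}(X)=1\}$. For the reverse inclusion, take any $X\in Min\{X\subseteq U:\underline{f}(X)=1\}$. Since $\underline{f}(X)=1$, there is exactly one block $P\in\mathbf{P}$ with $P\subseteq X$; then $\underline{f}(P)=1$ as shown above, and $P\subseteq X$, so by minimality of $X$ we get $X=P\in\mathbf{P}$. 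Combining the two inclusions yields $\mathbf{C}_{\mathbf{P}}=\mathbf{P}=Min\{X\subseteq U:\underline{f}(X)=1\}$.

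I do not anticipate a serious obstacle here; the only point requiring a little care is the step where $\underline{f}(X)=1$ is used to extract a unique block contained in $X$ and to conclude that this block is itself a set of value $1$ — this is where the disjointness of partition blocks does the real work, and it should be stated explicitly rather than waved through. One could alternatively phrase the whole argument via Proposition~\ref{P:independentsetrepresentlowernumber} together with $\mathbf{C}(M)=Min(\mathbf{I}^{c})$, noting that $\{X:\underline{f}(X)=0\}^{c}=\{X:\underline{f}(X)\geq 1\}$ and that on this complement the minimal elements are exactly those with $\underline{f}=1$; but the direct computation above is shorter and self-contained, so that is the route I would take.
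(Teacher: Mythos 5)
Your proof is correct. The paper in fact states this proposition without giving any proof at all, so there is no argument of the authors to compare against; your direct double-inclusion verification --- each block $P\in\mathbf{P}$ has $\underline{f}(P)=1$ and is minimal with this property because distinct blocks are nonempty and disjoint, and conversely any minimal $X$ with $\underline{f}(X)=1$ contains a block, hence equals it by minimality --- supplies exactly the routine justification the authors evidently regarded as straightforward, and it correctly uses $\mathbf{C}_{\mathbf{P}}=\mathbf{P}$ from Definition~\ref{D:matroidinducedbypartition}. The alternative route you sketch, via Proposition~\ref{P:independentsetrepresentlowernumber} and $\mathbf{C}(M)=Min(\mathbf{I}^{c})$, is closer in spirit to how the paper derives its other characterizations from the lower approximation number; if you take it, note that $Opp(\mathbf{I}_{\mathbf{P}})=\{X\subseteq U:\underline{f}(X)\geq 1\}$ and that its minimal members have $\underline{f}=1$ by the monotonicity property (2) of Proposition~\ref{P:propertiesoflowernumber}, so the two formulations of the minimal family coincide. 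Either way the argument is sound.
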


The rank function is to computer the cardinality of the maximal independent sets in subspaces.
Then, we use the lower approximation number to represent the rank function of the partition-circuit matroid induced by a partition.

\begin{proposition}
\label{P:rankandthelowerapproximation}
Let $\mathbf{P}$ be a partition on $U$ and $M_{\mathbf{P}}$ the partition-circuit matroid induced by $\mathbf{P}$.
Then, for all $X\subseteq U$,\\
\centerline{$r_{M_{\mathbf{P}}}(X)=|X|-\underline{f}(X)$.}
\end{proposition}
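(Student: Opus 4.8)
The plan is to compute $r_{M_{\mathbf{P}}}(X)$ directly from the description of the independent sets in Proposition~\ref{P:representedbypartitionmatroid}. Recall that $r_{M_{\mathbf{P}}}(X)=\max\{|I|:I\subseteq X,\ I\in\mathbf{I}_{\mathbf{P}}\}$ and that $\mathbf{I}_{\mathbf{P}}=\{Y\subseteq U:\forall P\in\mathbf{P},\ |Y\cap P|\leq |P|-1\}$. The key observation is that the independence condition is block-wise: since $\mathbf{P}$ partitions $U$, any $I\subseteq X$ decomposes as a disjoint union $I=\bigcup_{P\in\mathbf{P}}(I\cap P)$ with $I\cap P\subseteq X\cap P$, and $I\in\mathbf{I}_{\mathbf{P}}$ if and only if $|I\cap P|\leq |P|-1$ for every $P\in\mathbf{P}$. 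Hence the choices in different blocks do not interfere, and a maximum independent subset of $X$ is obtained by picking, in each block $P$, any $\min\{|X\cap P|,\ |P|-1\}$ elements of $X\cap P$; such a union is independent because each piece meets its own block in at most $|P|-1$ points, and no larger independent subset of $X$ can exist. This gives
\begin{center}
$r_{M_{\mathbf{P}}}(X)=\sum_{P\in\mathbf{P}}\min\{|X\cap P|,\ |P|-1\}$.
\end{center}

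Next I would split this sum according to whether a block is contained in $X$. Since $X\cap P\subseteq P$, we have $|X\cap P|=|P|$ exactly when $P\subseteq X$, and $|X\cap P|\leq |P|-1$ otherwise. Therefore $\min\{|X\cap P|,\ |P|-1\}$ equals $|X\cap P|-1$ when $P\subseteq X$ and equals $|X\cap P|$ when $P\nsubseteq X$. Summing over all blocks and using $\sum_{P\in\mathbf{P}}|X\cap P|=|X|$ (disjointness of the blocks) together with the definition $\underline{f}(X)=|\{K\in\mathbf{P}:K\subseteq X\}|$, we obtain
\begin{center}
$r_{M_{\mathbf{P}}}(X)=\sum_{P\in\mathbf{P}}|X\cap P|\ -\ |\{P\in\mathbf{P}:P\subseteq X\}|=|X|-\underline{f}(X)$,
\end{center}
which is the claimed identity.

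The only point requiring genuine care is the block-wise reduction, that is, verifying both that the union of locally maximal choices is truly independent and that it is optimal; both follow at once from the characterization of $\mathbf{I}_{\mathbf{P}}$ in Proposition~\ref{P:representedbypartitionmatroid} and the pairwise disjointness of the blocks, so I do not expect any serious obstacle. (Alternatively, one could bypass the explicit formula and argue directly: given independent $I\subseteq X$, each block with $P\subseteq X$ contributes at least one element of $X\setminus I$, and conversely any such "missing" element can be re-added in a block not yet contained in $I$ while preserving independence; but the summation argument above is the cleanest route.)
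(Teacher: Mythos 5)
Your proof is correct, but it takes a genuinely different route from the paper's. The paper's own argument goes through Proposition~\ref{P:rankdeterminematroid} ($X\in\mathbf{I}_{\mathbf{P}}$ iff $r_{M_{\mathbf{P}}}(X)=|X|$) combined with Proposition~\ref{P:independentsetrepresentlowernumber} ($\mathbf{I}_{\mathbf{P}}=\{X\subseteq U:\underline{f}(X)=0\}$): it notes that for independent $X$ one has $r_{M_{\mathbf{P}}}(X)=|X|$ and $\underline{f}(X)=0$, and declares the rest straightforward, so in particular the dependent case is left implicit. You instead compute the rank directly from the partition-matroid description of $\mathbf{I}_{\mathbf{P}}$ in Proposition~\ref{P:representedbypartitionmatroid}, deriving the intermediate formula $r_{M_{\mathbf{P}}}(X)=\sum_{P\in\mathbf{P}}\min\{|X\cap P|,\ |P|-1\}$ and then splitting the sum according to whether $P\subseteq X$. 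Your block-wise reduction is sound: since the blocks are pairwise disjoint, the union of locally maximal choices meets each block $P$ in at most $|P|-1$ points (hence is independent), and no independent subset of $X$ can exceed the blockwise bound; the case analysis $|X\cap P|=|P|$ iff $P\subseteq X$ is exactly right, and summing with $\sum_{P\in\mathbf{P}}|X\cap P|=|X|$ gives the claim. What your approach buys is a complete, self-contained verification valid for every $X\subseteq U$, together with an explicit closed form for the rank; what the paper's approach buys is brevity by leaning on the previously proved equivalence $\underline{f}(X)=0\Leftrightarrow X\in\mathbf{I}_{\mathbf{P}}$, though as written it only treats the independent case explicitly.
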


\begin{proof}
According to Proposition~\ref{P:rankdeterminematroid}, we only need to prove $\underline{f}(X)=0$ when $X\in\mathbf{I}_{\mathbf{P}}$.
According to Proposition~\ref{P:independentsetrepresentlowernumber}, it is straightforward.
\end{proof}

According to Definition~\ref{D:closure}, the closure of a subset is a set of all elements depending to the subset in matroids, in other words, the set of elements which are added to a subset, whose rank is equal to the rank of the subset, is the closure of the subset.
Then we represent the closure operator of partition-circuit matroids through the lower approximation number.

\begin{proposition}
\label{P:theclosureofpartition-circuitmatroid}
Let $\mathbf{P}$ be a partition on $U$ and $M_{\mathbf{P}}$ the partition-circuit matroid induced by $\mathbf{P}$.
Then, for all $X\subseteq U$,\\
\centerline{$cl_{M_{\mathbf{P}}}(X)=X\bigcup\{x\in X^{c}:\underline{f}(X\bigcup\{x\})=\underline{f}(X)+1\}$.}
\end{proposition}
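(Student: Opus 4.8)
The plan is to reduce the statement about the closure to a statement about rank, using the rank formula from Proposition~\ref{P:rankandthelowerapproximation}. By Definition~\ref{D:closure}, $cl_{M_{\mathbf{P}}}(X)=\{u\in U: r_{M_{\mathbf{P}}}(X)=r_{M_{\mathbf{P}}}(X\bigcup\{u\})\}$. First I would handle the trivial part: every $x\in X$ satisfies $X\bigcup\{x\}=X$, hence trivially $r_{M_{\mathbf{P}}}(X)=r_{M_{\mathbf{P}}}(X\bigcup\{x\})$, so $X\subseteq cl_{M_{\mathbf{P}}}(X)$. Thus it suffices to characterize which $x\in X^{c}$ belong to $cl_{M_{\mathbf{P}}}(X)$.

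For $x\in X^{c}$, I would substitute the rank formula on both sides. We have $r_{M_{\mathbf{P}}}(X)=|X|-\underline{f}(X)$ and $r_{M_{\mathbf{P}}}(X\bigcup\{x\})=|X\bigcup\{x\}|-\underline{f}(X\bigcup\{x\})=|X|+1-\underline{f}(X\bigcup\{x\})$, the last equality because $x\notin X$. Setting these equal gives $|X|-\underline{f}(X)=|X|+1-\underline{f}(X\bigcup\{x\})$, which simplifies immediately to $\underline{f}(X\bigcup\{x\})=\underline{f}(X)+1$. Hence $x\in cl_{M_{\mathbf{P}}}(X)\cap X^{c}$ if and only if $\underline{f}(X\bigcup\{x\})=\underline{f}(X)+1$, which is exactly the set appended to $X$ in the claimed formula. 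Combining with $X\subseteq cl_{M_{\mathbf{P}}}(X)$ yields the identity.

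There is essentially no main obstacle here; the only point requiring a moment of care is the monotonicity sanity check — by part~(2) of Proposition~\ref{P:propertiesoflowernumber} we know $\underline{f}(X)\leq\underline{f}(X\bigcup\{x\})$, and since adding a single element $x$ can complete at most one new block of the partition (any block $P$ with $P\subseteq X\bigcup\{x\}$ but $P\not\subseteq X$ must contain $x$, and there is only one such block), we have $\underline{f}(X\bigcup\{x\})\leq\underline{f}(X)+1$. So for $x\in X^{c}$ the difference $\underline{f}(X\bigcup\{x\})-\underline{f}(X)$ is either $0$ or $1$, and the condition $\underline{f}(X\bigcup\{x\})=\underline{f}(X)+1$ is the genuine dichotomy; equivalently, $x\notin cl_{M_{\mathbf{P}}}(X)$ exactly when $\underline{f}(X\bigcup\{x\})=\underline{f}(X)$. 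This observation is not strictly needed for the equality but clarifies why the formula is well posed. The whole argument is a direct computation once Proposition~\ref{P:rankandthelowerapproximation} is in hand.
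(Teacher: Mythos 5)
Your proof is correct and follows essentially the same route as the paper: both reduce the closure to the rank condition of Definition~\ref{D:closure}, split off the trivial inclusion $X\subseteq cl_{M_{\mathbf{P}}}(X)$, and for $x\in X^{c}$ substitute $r_{M_{\mathbf{P}}}(X)=|X|-\underline{f}(X)$ from Proposition~\ref{P:rankandthelowerapproximation} to obtain the equivalence $r_{M_{\mathbf{P}}}(X\bigcup\{x\})=r_{M_{\mathbf{P}}}(X)\Leftrightarrow\underline{f}(X\bigcup\{x\})=\underline{f}(X)+1$. Your added observation that the increment of $\underline{f}$ under adding one element is $0$ or $1$ is a harmless extra sanity check not present in (and not needed by) the paper's argument.
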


\begin{proof}
According to Definition~\ref{D:closure},  $cl_{M_{\mathbf{P}}}(X)=\{x\in U:r_{M_{\mathbf{P}}}(X\bigcup\{x\})=r_{M_{\mathbf{P}}}(X)\}$.
Then, we only need to prove $\{x\in X^{c}:\underline{f}(X\bigcup\{x\})=\underline{f}(X)+1\}=\{x\in X^{c}:r_{M_{\mathbf{P}}}(X\bigcup\{x\})=r_{M_{\mathbf{P}}}(X)\}$.
According to Proposition~\ref{P:rankandthelowerapproximation}, $r_{M_{\mathbf{P}}}(X)=|X|-\underline{f}(X)$.
Therefore, for all $x\in X^{c}$, $r_{M_{\mathbf{P}}}(X\bigcup\{x\})=r_{M_{\mathbf{P}}}(X)\Leftrightarrow |X\bigcup\{x\}|-\underline{f}(X\bigcup\{x\})=|X|-\underline{f}(X)\Leftrightarrow |X|+1-\underline{f}(X\bigcup\{x\})=|X|-\underline{f}(X)\Leftrightarrow\underline{f}(X\bigcup\{x\})=\underline{f}(X)+1$.
\end{proof}

Definition~\ref{D:closedset} presents that a set is a closed set when the closure of the set is equal to itself.

\begin{corollary}
Let $\mathbf{P}$ be a partition on $U$, $M_{\mathbf{P}}$ the partition-circuit matroid induced by $\mathbf{P}$ and $X\subseteq U$.
$X$ is a closed set iff $\underline{f}(X)=\underline{f}(X\bigcup\{x\})$ for any $x\in U$.
\end{corollary}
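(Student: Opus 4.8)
The plan is to derive this corollary directly from Proposition~\ref{P:theclosureofpartition-circuitmatroid} together with Definition~\ref{D:closedset}, so that essentially no new matroid machinery is needed. By Definition~\ref{D:closedset}, $X$ is a closed set of $M_{\mathbf{P}}$ precisely when $cl_{M_{\mathbf{P}}}(X)=X$. By Proposition~\ref{P:theclosureofpartition-circuitmatroid}, $cl_{M_{\mathbf{P}}}(X)=X\bigcup\{x\in X^{c}:\underline{f}(X\bigcup\{x\})=\underline{f}(X)+1\}$, and since the two sets being unioned are disjoint (the second is contained in $X^{c}$), we have $cl_{M_{\mathbf{P}}}(X)=X$ if and only if $\{x\in X^{c}:\underline{f}(X\bigcup\{x\})=\underline{f}(X)+1\}=\emptyset$.

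Next I would unwind that emptiness condition. It says that for every $x\in X^{c}$, $\underline{f}(X\bigcup\{x\})\neq\underline{f}(X)+1$. Now observe that adding a single element $x$ to $X$ can increase the lower approximation number by at most one: a block $P\in\mathbf{P}$ with $P\subseteq X\bigcup\{x\}$ but $P\not\subseteq X$ must satisfy $P\setminus X=\{x\}$, and if two distinct blocks both had this property they would share the point $x$, contradicting that $\mathbf{P}$ is a partition. Hence $\underline{f}(X\bigcup\{x\})\in\{\underline{f}(X),\underline{f}(X)+1\}$ for every $x$, and therefore "$\underline{f}(X\bigcup\{x\})\neq\underline{f}(X)+1$" is equivalent to "$\underline{f}(X\bigcup\{x\})=\underline{f}(X)$". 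Moreover for $x\in X$ one trivially has $X\bigcup\{x\}=X$, so $\underline{f}(X\bigcup\{x\})=\underline{f}(X)$ automatically. Combining these, the condition $\{x\in X^{c}:\underline{f}(X\bigcup\{x\})=\underline{f}(X)+1\}=\emptyset$ is equivalent to $\underline{f}(X)=\underline{f}(X\bigcup\{x\})$ for all $x\in U$, which is exactly the stated characterization.

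The only mildly delicate point — and the one I would state carefully — is the monotonicity-with-step-at-most-one observation, i.e., that $\underline{f}(X\bigcup\{x\})\le\underline{f}(X)+1$; this is what lets us replace "$\neq\underline{f}(X)+1$" by "$=\underline{f}(X)$" and extend the quantifier from $X^{c}$ to all of $U$. It follows immediately from the disjointness of the blocks of the partition $\mathbf{P}$, as sketched above, and is where the hypothesis that $\mathbf{P}$ is a partition (rather than a general covering) is genuinely used. Everything else is a routine rewriting of Definitions~\ref{D:closure} and~\ref{D:closedset} and an appeal to Proposition~\ref{P:theclosureofpartition-circuitmatroid}, so the proof is short.
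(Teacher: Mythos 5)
Your proof is correct and follows essentially the same route as the paper, whose own proof just cites Definition~\ref{D:closedset} and Proposition~\ref{P:theclosureofpartition-circuitmatroid} and declares the rest straightforward. The extra observation you make, that $\underline{f}(X\bigcup\{x\})\leq\underline{f}(X)+1$ because the blocks of $\mathbf{P}$ are pairwise disjoint (combined with monotonicity of $\underline{f}$ from Proposition~\ref{P:propertiesoflowernumber}), is precisely the detail needed to pass from the emptiness of $\{x\in X^{c}:\underline{f}(X\bigcup\{x\})=\underline{f}(X)+1\}$ to the stated condition for all $x\in U$, a step the paper leaves implicit.
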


\begin{proof}
According to Definition~\ref{D:closedset} and Proposition~\ref{P:theclosureofpartition-circuitmatroid}, it is straightforward.
\end{proof}

Some characteristics of partition-circuit matroids are well expressed by the lower approximation number based on partitions.

\subsection{duality of partition matroid through upper approximation number}

According to Proposition~\ref{P:partitoin-circuitmatroidispartitionone}, a partition-circuit matroid is a partition matroid.
The dual matroid of a partition matroid  can be expressed by the partition one in~\cite{LiuZhuZhang12Relationship}.
Through the upper approximation number~\cite{ZhuWang11Matroidal,WangZhu11Matroidal,WangZhuMin11Transversal} based on partitions in this paper, we investigate the dual matroids of partition-circuit matroids.
Firstly, we introduce a lemma to represent the relationship between a partition matroid and its dual matroid.

\begin{lemma}(\cite{LiuZhuZhang12Relationship})
\label{L:partitionmatroidanditsdual}
Let $\mathbf{P}=\{P_{1}, \cdots, P_{m}\}$ be a partition on $U$, $k_{1}, \cdots, k_{m}$ a group of nonnegative integers and $M(\mathbf{P}; k_{1}, \cdots, k_{m})$ the partition matroid induced by $\mathbf{P}$. Then
\begin{center}
$M^{*}(\mathbf{P}; k_{1}, \cdots, k_{m})=M(\mathbf{P}; |P_{1}|-k_{1}, \cdots, |P_{m}|- k_{m})$.
\end{center}
\end{lemma}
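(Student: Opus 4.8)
The plan is to prove the lemma by comparing independent sets on both sides, using the circuit/rank characterizations established earlier in the excerpt. Since a matroid is uniquely determined by its independent sets (or equivalently its rank function), it suffices to show that the matroid $M^{*}(\mathbf{P}; k_{1}, \cdots, k_{m})$ has the same independent sets as $M(\mathbf{P}; |P_{1}|-k_{1}, \cdots, |P_{m}|-k_{m})$. First I would recall, from Definition~\ref{D:partitionmatroid}, that $\mathbf{I}(\mathbf{P}; k_{1}, \cdots, k_{m}) = \{X \subseteq U : |X \cap P_{i}| \leq k_{i},\ 1 \leq i \leq m\}$, so that a base of $M(\mathbf{P}; k_{1}, \cdots, k_{m})$ is exactly a set $B$ with $|B \cap P_{i}| = \min\{k_{i}, |P_{i}|\}$ for each $i$; I would note that without loss of generality we may assume each $k_{i} \leq |P_{i}|$ (otherwise replace $k_{i}$ by $|P_{i}|$, which changes neither matroid), so bases satisfy $|B \cap P_{i}| = k_{i}$.

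Next I would use the definition of the dual matroid: $\mathbf{B}(M^{*}) = \{B : B^{c} \in \mathbf{B}(M)\}$ where $M = M(\mathbf{P}; k_{1}, \cdots, k_{m})$. For any $B \subseteq U$, since the $P_{i}$ partition $U$, we have $|B \cap P_{i}| = |P_{i}| - |B^{c} \cap P_{i}|$. Hence $B^{c}$ is a base of $M$, i.e. $|B^{c} \cap P_{i}| = k_{i}$ for all $i$, if and only if $|B \cap P_{i}| = |P_{i}| - k_{i}$ for all $i$. This shows $\mathbf{B}(M^{*})$ is precisely the family of sets meeting each $P_{i}$ in exactly $|P_{i}| - k_{i}$ elements, which is exactly $\mathbf{B}(M(\mathbf{P}; |P_{1}|-k_{1}, \cdots, |P_{m}|-k_{m}))$. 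Since a matroid is determined by its bases (Base axioms, the proposition on base axioms in the excerpt), the two matroids coincide.

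As an alternative, cleaner route that avoids the base bookkeeping, I could instead verify the rank identity of Proposition~\ref{P:relationshipbetweenmatroidanditsdual}: for $M = M(\mathbf{P}; k_{1}, \cdots, k_{m})$ one computes $r_{M}(X) = \sum_{i=1}^{m} \min\{k_{i}, |X \cap P_{i}|\}$, so $r_{M}(U) = \sum_{i} k_{i}$ and $r_{M}(X^{c}) = \sum_{i} \min\{k_{i}, |P_{i}| - |X \cap P_{i}|\}$; then plugging into $r^{*}_{M}(X) = |X| - r_{M}(U) + r_{M}(X^{c})$ and simplifying $\min\{k_{i}, |P_{i}| - t\} + (|P_{i}| - k_{i}) = \min\{|P_{i}|, |P_{i}| - t + (|P_{i}| - k_{i})\}$ — wait, the clean simplification is $|X \cap P_{i}| - k_{i} + \min\{k_{i}, |P_{i}| - |X \cap P_{i}|\} = \min\{|X \cap P_{i}|, |P_{i}| - k_{i}\}$ summed over $i$, which is exactly the rank function of $M(\mathbf{P}; |P_{1}|-k_{1}, \cdots, |P_{m}|-k_{m})$. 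Either way the conclusion follows.

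The main obstacle is really just the reduction to the case $k_{i} \leq |P_{i}|$ and keeping the arithmetic with $\min$ honest; there is no deep difficulty, and in fact the excerpt attributes this lemma to a prior paper~\cite{LiuZhuZhang12Relationship}, so I would expect the proof given here to be short — most likely a one-line appeal to the dual-base definition together with the partition identity $|B \cap P_{i}| + |B^{c} \cap P_{i}| = |P_{i}|$, exactly as in the first paragraph above.
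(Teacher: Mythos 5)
Your argument is correct, but note that the paper itself offers no proof to compare against: Lemma~\ref{L:partitionmatroidanditsdual} is simply quoted from the earlier paper~\cite{LiuZhuZhang12Relationship}, so you are supplying a proof where the authors give only a citation. Your base-complement route is sound: bases of $M(\mathbf{P};k_{1},\cdots,k_{m})$ are exactly the sets $B$ with $|B\cap P_{i}|=k_{i}$ for each $i$ (after the harmless normalization $k_{i}\leq|P_{i}|$), and the identity $|B\cap P_{i}|+|B^{c}\cap P_{i}|=|P_{i}|$ immediately converts $\mathbf{B}(M^{*})=\{B:B^{c}\in\mathbf{B}(M)\}$ into the base family of $M(\mathbf{P};|P_{1}|-k_{1},\cdots,|P_{m}|-k_{m})$; the base axioms then force equality of the matroids. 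Your alternative rank computation, $r^{*}_{M}(X)=\sum_{i}\bigl(|X\cap P_{i}|-k_{i}+\min\{k_{i},|P_{i}|-|X\cap P_{i}|\}\bigr)=\sum_{i}\min\{|X\cap P_{i}|,|P_{i}|-k_{i}\}$ via Proposition~\ref{P:relationshipbetweenmatroidanditsdual}, also checks out, and it has the small advantage of not requiring any discussion of which sets are maximal. The only caveat worth stating explicitly is that the lemma tacitly assumes $k_{i}\leq|P_{i}|$ for all $i$ (otherwise $|P_{i}|-k_{i}$ is negative and the right-hand matroid is not defined by Definition~\ref{D:partitionmatroid}); your ``replace $k_{i}$ by $|P_{i}|$'' remark should be phrased as identifying this implicit hypothesis rather than as a reduction, since the replacement alters the right-hand side as literally written.
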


The dual matroid of a partition matroid is also a partition one.
Then, we can obtain the following proposition.

\begin{proposition}
\label{p:dualmatroidofpartionmatroid}
Let $\mathbf{P}$ be a partition on $U$, $M_{\mathbf{P}}$ the partition-circuit matroid induced by $\mathbf{P}$ and $M_{\mathbf{P}}^{*}$ its dual matroid.
Then, $\mathbf{I}_{\mathbf{P}}^{*}=\{X\subseteq U:\forall P\in\mathbf{P}, |X\bigcap P|\leq 1\}$
\end{proposition}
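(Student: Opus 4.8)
The plan is to combine the structural result of Proposition~\ref{P:partitoin-circuitmatroidispartitionone}, which says $M_{\mathbf{P}}$ is a partition matroid, with the explicit description of its independent sets from Proposition~\ref{P:representedbypartitionmatroid}, and then apply the duality formula of Lemma~\ref{L:partitionmatroidanditsdual}. First I would fix the partition $\mathbf{P}=\{P_{1},\cdots,P_{m}\}$ and recall from Proposition~\ref{P:representedbypartitionmatroid} that $\mathbf{I}_{\mathbf{P}}=\{X\subseteq U:\forall P\in\mathbf{P},\ |X\bigcap P|\leq |P|-1\}$. Reading this against Definition~\ref{D:partitionmatroid}, this exhibits $M_{\mathbf{P}}$ as the partition matroid $M(\mathbf{P};k_{1},\cdots,k_{m})$ with the specific choice $k_{i}=|P_{i}|-1$ for each $i$; this is exactly the identification already made (implicitly) in the proof of Proposition~\ref{P:partitoin-circuitmatroidispartitionone}.

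Next I would invoke Lemma~\ref{L:partitionmatroidanditsdual} with these parameters. The lemma gives
\begin{center}
$M_{\mathbf{P}}^{*}=M^{*}(\mathbf{P};|P_{1}|-1,\cdots,|P_{m}|-1)=M(\mathbf{P};|P_{1}|-(|P_{1}|-1),\cdots,|P_{m}|-(|P_{m}|-1))=M(\mathbf{P};1,\cdots,1).$
\end{center}
Applying Definition~\ref{D:partitionmatroid} once more to $M(\mathbf{P};1,\cdots,1)$ reads off its independent sets as $\mathbf{I}_{\mathbf{P}}^{*}=\mathbf{I}(\mathbf{P};1,\cdots,1)=\{X\subseteq U:|X\bigcap P_{i}|\leq 1,\ 1\leq i\leq m\}=\{X\subseteq U:\forall P\in\mathbf{P},\ |X\bigcap P|\leq 1\}$, which is the claimed equality. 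So the argument is essentially a two-line chain: recognize the $k_{i}$, substitute into the duality formula, and unfold the definition.

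I do not expect a genuine obstacle here, since every ingredient is already available. The only point requiring a little care is the boundary behavior when some block $P_{i}$ is a singleton: then $k_{i}=|P_{i}|-1=0$, and the dual parameter is $|P_{i}|-k_{i}=1$; one should check that Lemma~\ref{L:partitionmatroidanditsdual} and Definition~\ref{D:partitionmatroid} are stated for nonnegative (not just positive) integers, which they are, so the formula $|X\cap P_i|\le 1$ is still the correct — and, for a singleton block, vacuously satisfied — constraint. A secondary minor point is to make the identification $M_{\mathbf{P}}=M(\mathbf{P};|P_1|-1,\cdots,|P_m|-1)$ fully explicit rather than leaning on the "straightforward" proof of Proposition~\ref{P:partitoin-circuitmatroidispartitionone}; that is immediate from comparing the two descriptions of the independent-set families. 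Hence the proof will simply cite Proposition~\ref{P:representedbypartitionmatroid}, Definition~\ref{D:partitionmatroid}, and Lemma~\ref{L:partitionmatroidanditsdual}, and perform the substitution.
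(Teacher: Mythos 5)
Your proposal is correct and follows exactly the route the paper intends: the paper's proof simply cites Definition~\ref{D:partitionmatroid}, Proposition~\ref{P:representedbypartitionmatroid} and Lemma~\ref{L:partitionmatroidanditsdual} as "straightforward," and your argument fills in precisely that substitution $k_{i}=|P_{i}|-1$ and the dual parameters $|P_{i}|-k_{i}=1$. No gaps; your extra remark on singleton blocks is a harmless sanity check.
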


\begin{proof}
According to Definition~\ref{D:partitionmatroid}, Proposition~\ref{P:representedbypartitionmatroid} and Lemma~\ref{L:partitionmatroidanditsdual}, it is straightforward.
\end{proof}

The dual matroid of a partition-circuit matroid can be expressed by the elements of the partition.
Then, the independent sets of the dual matroid of a partition-circuit matroid are investigated through the upper approximation number.

\begin{proposition}
\label{P:independentbyuppernumber}
Let $\mathbf{P}$ be a partition on $U$, $M_{\mathbf{P}}$ the partition-circuit matroid induced by $\mathbf{P}$ and $M_{\mathbf{P}}^{*}$ its dual matroid.
Then, $\mathbf{I}_{\mathbf{P}}^{*}=\{X\subseteq U:\overline{f}(X)=|X|\}$.
\end{proposition}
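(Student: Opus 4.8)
The plan is to connect the combinatorial description of $\mathbf{I}_{\mathbf{P}}^{*}$ from Proposition~\ref{p:dualmatroidofpartionmatroid} with the upper approximation number $\overline{f}$ from Definition~\ref{D:lowerandupperapproximationnumbers}, recalling that here $\mathbf{C}=\mathbf{P}$ is a partition. By Proposition~\ref{p:dualmatroidofpartionmatroid} it suffices to show, for every $X\subseteq U$, the equivalence $(\forall P\in\mathbf{P},\ |X\cap P|\leq 1)\Leftrightarrow\overline{f}(X)=|X|$. Since $\mathbf{P}$ is a partition, the sets $\{X\cap P:P\in\mathbf{P}\}$ are pairwise disjoint and cover $X$, so $|X|=\sum_{P\in\mathbf{P}}|X\cap P|$. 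On the other hand, $\overline{f}(X)=|\{P\in\mathbf{P}:P\cap X\neq\emptyset\}|=\sum_{P\in\mathbf{P},\,P\cap X\neq\emptyset}1=\sum_{P\in\mathbf{P}}\min(|X\cap P|,1)$.

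First I would prove the forward direction: if $|X\cap P|\leq 1$ for all $P\in\mathbf{P}$, then $\min(|X\cap P|,1)=|X\cap P|$ for each $P$, so the two sums above are term-by-term equal, giving $\overline{f}(X)=|X|$. Then I would prove the converse by contraposition: if some $P_{0}\in\mathbf{P}$ has $|X\cap P_{0}|\geq 2$, then $\min(|X\cap P_{0}|,1)=1<|X\cap P_{0}|$ while $\min(|X\cap P|,1)\leq|X\cap P|$ for every other $P$, so summing gives $\overline{f}(X)<|X|$; hence $\overline{f}(X)=|X|$ forces $|X\cap P|\leq 1$ for all $P$. Combining the two directions yields the claimed equality of sets, and then Proposition~\ref{p:dualmatroidofpartionmatroid} finishes the proof.

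Alternatively — and perhaps more in the style of the surrounding proofs, which lean on the earlier characterizations rather than on direct counting — one can argue via complementation using property (5) of Proposition~\ref{P:propertiesoflowernumber}, namely $\underline{f}(X)+\overline{f}(X^{c})=|\mathbf{P}|$, together with $r^{*}_{M_{\mathbf{P}}}(X)=|X|$ iff $X\in\mathbf{I}_{\mathbf{P}}^{*}$ (Proposition~\ref{P:rankdeterminematroid}) and the duality formula of Proposition~\ref{P:relationshipbetweenmatroidanditsdual}; substituting $r_{M_{\mathbf{P}}}$ from Proposition~\ref{P:rankandthelowerapproximation} and then rewriting $\underline{f}(X^{c})$ in terms of $\overline{f}(X)$ reduces the statement to an identity about $|\mathbf{P}|$, $|U|$, $|X|$ and the two approximation numbers. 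I would likely present the short direct-counting argument as the main line and mention this dual route as a remark.

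The only mild obstacle is bookkeeping: one must be careful that $\overline{f}$ here counts \emph{blocks of the partition} that meet $X$, not elements, and that the key inequality $\overline{f}(X)\leq|X|$ (with equality exactly when no block is hit twice) is precisely the disjointness-of-blocks fact. No step requires anything beyond finite cardinality arithmetic and the already-established Proposition~\ref{p:dualmatroidofpartionmatroid}, so I expect the proof to be short.
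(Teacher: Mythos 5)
Your main argument is exactly the paper's: reduce via Proposition~\ref{p:dualmatroidofpartionmatroid} to the set equality $\{X:\forall P\in\mathbf{P},\,|X\cap P|\leq 1\}=\{X:\overline{f}(X)=|X|\}$ and compare $|X|=\sum_{P\in\mathbf{P}}|X\cap P|$ with the block count $\overline{f}(X)$; your version just spells out the term-by-term comparison (via $\min(|X\cap P|,1)$ and the contrapositive) that the paper states tersely as an equivalence. The proposal is correct, and the dual-rank route you mention is only an aside, so no substantive difference from the paper's proof.
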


\begin{proof}
According to Proposition~\ref{p:dualmatroidofpartionmatroid}, we only need to prove $\{X\subseteq U:\forall P\in\mathbf{P}, |X\bigcap P|\leq 1\}=\{X\subseteq U:\overline{f}(X)=|X|\}$.
For all $X\subseteq U$, $|X|=|X\bigcap U|=|X\bigcap(\bigcup_{P\in\mathbf{P}}P)|=\sum_{P\in\mathbf{P}}|X\bigcap P|$.
$\forall P\in\mathbf{P}, |X\bigcap P|\leq 1\Leftrightarrow\overline{f}(X)=\sum_{P\in\mathbf{P}}|X\bigcap P|=|X|$.
To sum up, this completes the proof.
\end{proof}

One of equivalent definitions of a matroid is rank function.
And a matroid and its rank function determine each other.
The rank function of the dual matroids of partition-circuit ones can be well expressed by the upper approximation number.

\begin{proposition}
\label{P:rankrepresentbyuppernumber}
Let $\mathbf{P}$ be a partition on $U$, $M_{\mathbf{P}}$ the partition-circuit matroid induced by $\mathbf{P}$ and $M_{\mathbf{P}}^{*}$ its dual matroid.
Then, for all $X\subseteq U$, $r_{M_{\mathbf{P}}}^{*}(X)=\overline{f}(X)$.
\end{proposition}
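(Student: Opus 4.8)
## Proof Proposal

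The plan is to compute $r_{M_{\mathbf{P}}}^{*}(X)$ directly from Proposition~\ref{P:relationshipbetweenmatroidanditsdual}, which relates the rank function of a matroid to that of its dual, and then to rewrite the resulting expression in terms of the upper approximation number using Proposition~\ref{P:rankandthelowerapproximation} and Proposition~\ref{P:propertiesoflowernumber}(5). So the computation starts from
\begin{center}
$r_{M_{\mathbf{P}}}^{*}(X)=|X|-r_{M_{\mathbf{P}}}(U)+r_{M_{\mathbf{P}}}(X^{c})$.
\end{center}

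First I would evaluate the two rank terms on the right. By Proposition~\ref{P:rankandthelowerapproximation}, $r_{M_{\mathbf{P}}}(X^{c})=|X^{c}|-\underline{f}(X^{c})$, and likewise $r_{M_{\mathbf{P}}}(U)=|U|-\underline{f}(U)$. Since $\mathbf{P}$ is a partition, every block $P\in\mathbf{P}$ satisfies $P\subseteq U$, so $\underline{f}(U)=|\{P\in\mathbf{P}:P\subseteq U\}|=|\mathbf{P}|=|\mathbf{C}_{\mathbf{P}}|$. Substituting, the expression becomes
\begin{center}
$r_{M_{\mathbf{P}}}^{*}(X)=|X|-(|U|-|\mathbf{P}|)+(|X^{c}|-\underline{f}(X^{c}))=|X|+|X^{c}|-|U|+|\mathbf{P}|-\underline{f}(X^{c})$.
\end{center}
Because $|X|+|X^{c}|=|U|$, this collapses to $r_{M_{\mathbf{P}}}^{*}(X)=|\mathbf{P}|-\underline{f}(X^{c})$.

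Finally I would invoke Proposition~\ref{P:propertiesoflowernumber}(5) with the partition $\mathbf{P}$ playing the role of the covering $\mathbf{C}$: $\underline{f}(X^{c})+\overline{f}((X^{c})^{c})=|\mathbf{P}|$, i.e. $\underline{f}(X^{c})=|\mathbf{P}|-\overline{f}(X)$. Substituting this into $r_{M_{\mathbf{P}}}^{*}(X)=|\mathbf{P}|-\underline{f}(X^{c})$ yields $r_{M_{\mathbf{P}}}^{*}(X)=\overline{f}(X)$, as claimed. There is no real obstacle here — the argument is a short chain of substitutions — but the one point that deserves care is the bookkeeping with $|\mathbf{P}|$ versus $|U|$ and making sure the complementation identity from Proposition~\ref{P:propertiesoflowernumber}(5) is applied to $X^{c}$ rather than $X$; an alternative, perhaps cleaner, route is to avoid the rank arithmetic entirely and instead check that for $X\in\mathbf{I}_{\mathbf{P}}^{*}$ one has $\overline{f}(X)=|X|$ via Proposition~\ref{P:independentbyuppernumber}, then argue the general case by choosing a maximal independent subset of $X$ in $M_{\mathbf{P}}^{*}$ and relating its size to $\overline{f}(X)$, but the direct dual-rank computation above is the most economical.
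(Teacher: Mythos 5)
Your proof is correct and follows essentially the same route as the paper's: start from the dual-rank identity $r^{*}_{M_{\mathbf{P}}}(X)=|X|-r_{M_{\mathbf{P}}}(U)+r_{M_{\mathbf{P}}}(X^{c})$, substitute $r_{M_{\mathbf{P}}}(Y)=|Y|-\underline{f}(Y)$ with $\underline{f}(U)=|\mathbf{P}|$, and finish with Proposition~\ref{P:propertiesoflowernumber}(5) applied to $X^{c}$. The only difference is presentational bookkeeping, so nothing further is needed.
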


\begin{proof}
According to Proposition~\ref{P:relationshipbetweenmatroidanditsdual}, $r^{*}_{M_{\mathbf{P}}}(X)=|X|-r_{M_{\mathbf{P}}}(U)+r_{M_{\mathbf{P}}}(X^{c})$.
According to (5) of Proposition~\ref{P:propertiesoflowernumber}, $\underline{f}(X^{c})+\overline{f}(X)=|\mathbf{P}|$.
According to Proposition~\ref{P:rankandthelowerapproximation}, $r_{M_{\mathbf{P}}}(X)=|X|-\underline{f}(X)$.
Therefore, for all $X\subseteq U$, $r_{M_{\mathbf{P}}}^{*}(X)=|X|-r_{M_{\mathbf{P}}}(U)+r_{M_{\mathbf{P}}}(X^{c})=|X|-(|U|-\underline{f}(U))+(|X^{c}|-\underline{f}(X^{c}))=|X|+|X^{c}|-|U|+\underline{f}(U)-\underline{f}(X^{c})=\underline{f}(U)-\underline{f}(X^{c})=|\mathbf{P}|-\underline{f}(X^{c})=\overline{f}(X)$.
\end{proof}

Added an element to a set, the rank of the set does not change, then the element belongs to the closure of the set.
According to Proposition~\ref{P:rankrepresentbyuppernumber}, the closure operator of a matroid can be represented through the upper approximation number.

\begin{proposition}
Let $\mathbf{P}$ be a partition on $U$, $M_{\mathbf{P}}$ the partition-circuit matroid induced by $\mathbf{P}$ and $M_{\mathbf{P}}^{*}$ its dual matroid.
Then, for all $X\subseteq U$,\\
\centerline{$cl_{M_{\mathbf{P}}}^{*}(X)=\{x\in U:\overline{f}(X)=\overline{f}(X\bigcup\{x\})\}$.}
\end{proposition}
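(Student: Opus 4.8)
The plan is to reduce the statement directly to the definition of the closure operator (Definition~\ref{D:closure}) and then substitute the rank formula for the dual matroid that has just been established. By Definition~\ref{D:closure} applied to the matroid $M_{\mathbf{P}}^{*}$, we have $cl_{M_{\mathbf{P}}}^{*}(X)=\{x\in U:r_{M_{\mathbf{P}}}^{*}(X)=r_{M_{\mathbf{P}}}^{*}(X\bigcup\{x\})\}$. So the whole content of the proposition is to check that the equality of dual ranks $r_{M_{\mathbf{P}}}^{*}(X)=r_{M_{\mathbf{P}}}^{*}(X\bigcup\{x\})$ is equivalent to the equality of upper approximation numbers $\overline{f}(X)=\overline{f}(X\bigcup\{x\})$.

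First I would invoke Proposition~\ref{P:rankrepresentbyuppernumber}, which says $r_{M_{\mathbf{P}}}^{*}(Y)=\overline{f}(Y)$ for every $Y\subseteq U$. Applying this with $Y=X$ and with $Y=X\bigcup\{x\}$ immediately gives $r_{M_{\mathbf{P}}}^{*}(X)=\overline{f}(X)$ and $r_{M_{\mathbf{P}}}^{*}(X\bigcup\{x\})=\overline{f}(X\bigcup\{x\})$, so the two equalities coincide term by term. Hence $x\in cl_{M_{\mathbf{P}}}^{*}(X)$ iff $\overline{f}(X)=\overline{f}(X\bigcup\{x\})$, which is exactly the claimed description of the set. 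One minor point worth noting: when $x\in X$ the set $X\bigcup\{x\}$ equals $X$, so such $x$ trivially satisfy the condition and lie in the closure, consistent with the fact that $X\subseteq cl_{M_{\mathbf{P}}}^{*}(X)$ always holds in a matroid; the formula as written over all $x\in U$ already absorbs this case.

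There is essentially no obstacle here — the proposition is a one-line corollary of Proposition~\ref{P:rankrepresentbyuppernumber} together with the definition of the closure operator. The only care needed is bookkeeping: making sure the rank substitution is applied to both $X$ and $X\bigcup\{x\}$, and that the index set of $x$ is all of $U$ (matching Definition~\ref{D:closure}) rather than just $X^{c}$. Unlike the analogous Proposition~\ref{P:theclosureofpartition-circuitmatroid} for the primal matroid, where the rank formula $r_{M_{\mathbf{P}}}(X)=|X|-\underline{f}(X)$ forced a cardinality adjustment (turning equality of ranks into $\underline{f}(X\bigcup\{x\})=\underline{f}(X)+1$), here the clean identity $r_{M_{\mathbf{P}}}^{*}=\overline{f}$ means no such adjustment arises and the closure condition is literally $\overline{f}(X)=\overline{f}(X\bigcup\{x\})$.

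So the proof I would write is: ``According to Definition~\ref{D:closure}, $cl_{M_{\mathbf{P}}}^{*}(X)=\{x\in U:r_{M_{\mathbf{P}}}^{*}(X\bigcup\{x\})=r_{M_{\mathbf{P}}}^{*}(X)\}$. According to Proposition~\ref{P:rankrepresentbyuppernumber}, $r_{M_{\mathbf{P}}}^{*}(Y)=\overline{f}(Y)$ for all $Y\subseteq U$. Therefore $r_{M_{\mathbf{P}}}^{*}(X\bigcup\{x\})=r_{M_{\mathbf{P}}}^{*}(X)\Leftrightarrow\overline{f}(X\bigcup\{x\})=\overline{f}(X)$, which completes the proof.''
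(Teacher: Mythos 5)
Your proof is correct and follows exactly the route the paper takes: the paper's own proof is the one-liner ``According to Definition~\ref{D:closure} and Proposition~\ref{P:rankrepresentbyuppernumber}, it is straightforward,'' and you have simply written out the substitution $r_{M_{\mathbf{P}}}^{*}=\overline{f}$ into the closure definition explicitly. No differences in approach and no gaps.
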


\begin{proof}
According to Definition~\ref{D:closure} and Proposition~\ref{P:rankrepresentbyuppernumber}, it is straightforward.
\end{proof}

\section{Conclusions}
\label{S:conclusions}
In this paper, we proposed partition-circuit matroids induced by partitions, and studied some characteristics of them and their dual matroids.
Similar to the upper approximation number, we proposed the lower approximation number with respect to a covering.
Then the properties of the lower approximation number and the relationships between it and the upper approximation number are studied.
Partitions are a special kind of coverings.
In this paper, we used the lower and upper approximation numbers, which are based on partitions.
Some characteristics of partition-circuit matroids and the dual matroids of them are investigated through the lower and upper approximation numbers.
In future work, we will study the connection between covering-based rough sets and matroids through the lower and upper approximation numbers based on coverings.

\section{Acknowledgments}
This work is supported in part by the National Natural Science Foundation of China under Grant No. 61170128, the Natural Science Foundation of Fujian Province, China, under Grant Nos. 2011J01374 and 2012J01294, the Science and Technology Key Project of Fujian Province, China, under Grant No. 2012H0043 and State key laboratory of management and control for complex systems open project under Grant No. 20110106.



\end{document}